\documentclass{article} % For LaTeX2e
\usepackage{iclr2026_conference,times}
\usepackage{algorithm}
\usepackage{algpseudocode}
\usepackage{algpseudocode}
\usepackage{amsmath, amssymb}
\usepackage{amsthm,refcount}
\usepackage{csquotes}

\usepackage{enumitem}
\usepackage{xspace}
\usepackage[utf8]{inputenc}
\usepackage{graphicx}
\usepackage{rotating}
\usepackage{tabularx}
\usepackage{wrapfig}
\usepackage{booktabs}
\usepackage{makecell}
\usepackage{multirow}
\usepackage{ragged2e}
\usepackage{colortbl}
\usepackage{array}
\newcolumntype{L}{>{\RaggedRight}p{0.8\textwidth}}
\renewcommand{\arraystretch}{1.1}
\small

\usepackage{xcolor}
\definecolor{lightblue}{rgb}{0.2,0.4,0.8} 

\newcommand{\rwOne}[1]{\textcolor{black}{#1}}
\newcommand{\rwTwo}[1]{\textcolor{black}{#1}}
\newcommand{\rwThree}[1]{\textcolor{black}{#1}}
\newcommand{\rwFour}[1]{\textcolor{black}{#1}}
\newcolumntype{C}[1]{>{\centering\arraybackslash}p{#1}}

\usepackage{amsmath,amsfonts,bm}

\def\eqref#1{equation~\ref{#1}}
\def\1{\bm{1}}

\DeclareMathAlphabet{\mathsfit}{\encodingdefault}{\sfdefault}{m}{sl}
\SetMathAlphabet{\mathsfit}{bold}{\encodingdefault}{\sfdefault}{bx}{n}

\usepackage{hyperref}
\usepackage{url}
\usepackage{tikz,xcolor}
\usepackage[dvipsnames]{xcolor}
\usepackage{pgfplots}
\pgfplotsset{compat=1.17}
\usepackage{subcaption}

\newtheorem{hypothesis}{Hypothesis} 

\newcounter{HypothesisCounter}

\newtheorem{theorem}{Theorem}[section]

\newtheorem{proposition}[theorem]{Proposition}
\newtheorem{corollary}[theorem]{Corollary}

\usepackage{lineno}

\newcommand{\modelname}{\textsf{ATEX-CF}\xspace}
\title{\modelname: Attack-Informed Counterfactual Explanations for Graph Neural Networks%Effective Counterfactual via Attack-Explanation Fusion for Node Classification
\vspace{0.5em}
}

\author{
\begin{tabular}{cc} % two columns
\begin{minipage}[t]{0.45\textwidth}
\centering
Yu Zhang \\
{\normalfont\small
Aalborg University \\
Aalborg, Denmark, 9220 \\
\texttt{yuzhang@cs.aau.dk}
}
\end{minipage} &
\begin{minipage}[t]{0.45\textwidth}
\centering
Sean Bin Yang \\
{\normalfont\small
Aalborg University \\
Aalborg, Denmark, 9220 \\
\texttt{seany@cs.aau.dk}
}
\end{minipage} \\[4em]
\begin{minipage}[t]{0.45\textwidth}
\centering
Arijit Khan \\
{\normalfont\small
Bowling Green State University, USA,  43403\\
Aalborg University, Denmark, 9220\\
\texttt{arijitk@bgsu.edu}
}
\end{minipage} &
\begin{minipage}[t]{0.45\textwidth}
\centering
Cuneyt Gurcan Akcora \\
{\normalfont\small
University of Central Florida \\
Orlando, FL, USA, 32816 \\
\texttt{cuneyt.akcora@ucf.edu}
}
\end{minipage}
\end{tabular}
}

\iclrfinalcopy % Uncomment for camera-ready version, but NOT for submission.

\begin{document}

\maketitle

\ificlrfinal
  \fancyhead[L]{Published as a conference paper at ICLR 2026}
\else
  \fancyhead[L]{Under review as a conference paper at ICLR 2026}
\fi
\fancyhead[R]{}
\renewcommand{\headrulewidth}{0.4pt}

\begin{abstract}
Counterfactual explanations offer an intuitive way to interpret graph neural networks (GNNs) by identifying minimal changes that alter a model’s prediction, thereby answering “{\em what must differ for a different outcome?}”. 
In this work, we propose a novel framework, \modelname that unifies adversarial attack techniques with counterfactual explanation generation—a connection made feasible by their shared goal of flipping a node’s prediction, yet differing in perturbation strategy: adversarial attacks often rely on edge additions, while counterfactual methods typically use deletions.
Unlike traditional approaches that treat explanation and attack separately, our method efficiently integrates both edge additions and deletions, grounded in theory, leveraging adversarial insights to explore impactful counterfactuals.
In addition, by jointly optimizing fidelity, sparsity, and plausibility under a constrained perturbation budget, our method produces instance-level explanations that are both informative and realistic.
Experiments on synthetic and real-world node classification benchmarks demonstrate that \modelname generates faithful, concise, and plausible explanations, highlighting the effectiveness of integrating adversarial insights into counterfactual reasoning for GNNs. \textbf{Our code is available at  \url{https://github.com/zhangyuo/ATEX_CF}.}
\end{abstract}

\section{Introduction}

Graph neural networks excel at node classification by recursively aggregating neighbor features and graph topology, yet their opaque inference undermines trust in critical applications such as healthcare, finance, and scientific discovery \citep{DBLP:journals/pacmmod/ChenQWKKG24,DBLP:journals/csur/ZhongBM25}. This limitation has spurred research into GNN explainability, with {\em counterfactual methods} \citep{yuan2022explainability,DBLP:journals/corr/abs-2505-11396,DBLP:journals/csur/PradoRomeroPSG24} in particular aiming to determine the smallest modifications to node features or graph structure that cause a model's prediction to change.

Meanwhile, \emph{adversarial attacks} \citep{DBLP:journals/tkde/ZhangZLCWKYD24,DBLP:journals/tkde/ZhuCYH24,DBLP:journals/tkde/SunDYZWYHL23} on GNNs have become an equally important line of research, as GNNs can be undermined by minimal, strategically crafted graph-structure perturbations, highlighting the need for robustness analysis. Consequently,  robustness against adversarial attacks has become a key priority in GNN research.

Traditional counterfactual graph generation methods, e.g., CF$^2$ \citep{tan2022learning}, GCFExplainer \citep{huang2023global}, primarily rely on \emph{edge deletion} to identify crucial substructures that support a particular prediction. While effective, this deletion-centric perspective overlooks the role of \emph{missing relations} in the original graph whose addition could substantially influence predictions. In parallel, extensive studies in graph adversarial learning have demonstrated that adding a small (e.g., 2) number of carefully selected edges can effectively flip the prediction of a target node \citep{DBLP:journals/tkde/ChenWSPDWCG25,DBLP:journals/tkde/ZhuCYH24}.  
Such added edges---though absent in the input graph---often correspond to semantically plausible and structurally coherent relations. 

Despite their importance, current approaches address these two directions largely in isolation. From a counterfactual reasoning perspective, adversarially added edges naturally serve as \textit{actionable candidates} for counterfactual generation: {\em They represent the minimal structural additions required to alter the model’s decision.} However, existing counterfactual methods, which predominantly rely on edge deletion, have largely overlooked the potential of incorporating edge-addition information derived from adversarial attack strategies.

Motivated by these insights, {\em we design a unified framework, \modelname \footnote{abbreviation for \textbf{\underline{At}}tack \textbf{\underline{Ex}}planation \textbf{\underline{C}}ounter\textbf{\underline{f}}actual} that incorporates attack semantics into counterfactual generation in a controlled and interpretable manner.} 
Extending counterfactual generation to include \emph{edge addition} has significant benefits. From a \textbf{quantitative} perspective, we demonstrate that edge-addition counterfactuals can (1) increase the likelihood of flipping predictions and (2) achieve this with a smaller perturbation budget. From a \textbf{qualitative} perspective, they provide practical advantages: (1) \textbf{Complementary explanatory coverage} --- while edge-deletion counterfactual identifies which existing relations are crucial for a prediction, edge-addition candidates reveal which missing relations could have altered the outcome. For example, in healthcare, a GNN may classify a patient as low-risk for heart disease due to the lack of an edge representing \textquote{symptom--drug correlation}, while introducing an edge \textquote{patient medication record $\rightarrow$ cardiac side effects} can flip the prediction and reveal hidden reasoning paths. (2) \textbf{Uncovering model bias and data deficiencies}  --- adding certain edges can divulge over-reliance on specific nodes or structural biases. For example, a paper may be misclassified as \textquote{theoretical mathematics} due to missing citation edges to authoritative AI conferences. Introducing an edge \textquote{paper $\rightarrow$ ICLR Best Paper Award} corrects the prediction, highlighting dataset limitations and model vulnerabilities.  
\begin{wrapfigure}[11]{r}{0.5\textwidth}
\vspace{-15px}
  \centering
  \includegraphics[width=0.38\textwidth]{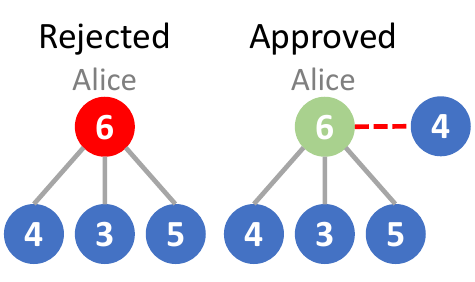}
  \caption{Illustration of counterfactual limitations in the Loan Decision dataset.}
  \label{fig:loan-case}
\end{wrapfigure}

\textbf{Case Study.}  
To illustrate the limitations of existing counterfactual methods, consider a scenario from the \textit{Loan-Decision} dataset \citep{ma2025c2explainer}.
Loan approval is granted when both conditions are met: \texttt{income} $> 5$ and \texttt{degree} $> 3$.  
Applicant \texttt{Alice} has income $6$ (satisfies condition) but degree $3$ (fails). The model predicts rejection. Classical \textbf{deletion-based} counterfactual methods fail here--removing edges further reduces degree.  
Unconstrained \textbf{edge additions} (e.g., linking to a billionaire) succeed but can be implausible.  Our method \modelname identifies a feasible peer connection that serves as an actionable update and flips the prediction.

\iffalse
\begin{figure}[H]
\centering
\begin{tikzpicture}[node distance=1.5cm]
    \node[draw, circle, fill=red!20, label=above:{\small Alice (income=6)}] (A) at (0,0) {};
    \node[draw, circle, fill=blue!20] (B) at (-1,-1) {};
    \node[draw, circle, fill=blue!20] (C) at (0,-1) {};
    \node[draw, circle, fill=blue!20] (D) at (1,-1) {};
    \path (A) edge (B); \path (A) edge (C); \path (A) edge (D);
    \node[red] at (0,0.8) {\textbf{Rejected}};
    
    \draw[->, thick] (2,0) -- (4,0);
    
    \node[draw, circle, fill=green!20, label=above:{\small Alice (income=6)}] (A2) at (5,0) {};
    \node[draw, circle, fill=blue!20] (B2) at (4,-1) {};
    \node[draw, circle, fill=blue!20] (C2) at (5,-1) {};
    \node[draw, circle, fill=blue!20] (D2) at (6,-1) {};
    \node[draw, circle, fill=blue!20, label=above:{\small
    Peer income=4}] (E) at (7.5,0) {};
    \path (A2) edge (B2); \path (A2) edge (C2); \path (A2) edge (D2);
    \path (A2) edge[green, thick] (E);
    \node[green] at (5,0.8) {\textbf{Approved}};
\end{tikzpicture}
\caption{Adding a realistic edge flips Alice's outcome from rejection to approval.}
\label{fig:loan-case}
\end{figure}
\fi

While this fusion is promising, combining adversarial attacks with counterfactual explanations is non-trivial~\cite{freiesleben2022intriguing}. Adversarial edges are optimized for misclassifications rather than interpretability, raising challenges in ensuring the qualities of a good counterfactual explanation, such as \emph{high impact}, \emph{sparsity}, and \emph{plausibility} \citep{DBLP:journals/csur/LongaASCLLP25}. 
Furthermore, when considering missing edge additions to the input graph, the search space of possible perturbations remains combinatorially large, requiring principled mechanisms to balance effectiveness with efficiency. 

\textbf{Our contributions} can be summarized as follows:
\begin{itemize}[noitemsep,nolistsep,leftmargin=*]
    \item \textbf{Unified perspective.} We establish, for the first time, a theoretical bridge between adversarial attacks and counterfactual explanations in GNNs, showing that adversarial edge additions can be repurposed as counterfactual candidates. This connection provides a principled foundation for unifying attack and explanation.
    
    \item \textbf{Hybrid counterfactual framework.} We design a novel solution, \modelname, that simultaneously leverages \emph{edge deletions} (traditional counterfactual explanations) and \emph{attack-informed edge additions} (from adversarial strategies), thereby offering a more comprehensive and actionable counterfactual than deletion-only approaches.
    
    \item \textbf{Enhanced explanatory coverage.} By incorporating edge-addition counterfactuals, \modelname uncovers missing but semantically plausible relations, complements deletion-based explanations, and enables proactive optimization (e.g., suggesting constructive graph modifications rather than only indicating critical existing edges).
    
    \item \textbf{Efficiency and controllability.} We exploit adversarial attack logistics to form a focused candidate space, significantly reducing the combinatorial complexity of our counterfactual search. In addition, \modelname integrates sparsity and plausibility constraints to ensure interpretable and realistic explanations. 
    
    \item \textbf{Empirical validation.} Through experiments on benchmark datasets, we demonstrate that \modelname improves explanatory power, maintains semantic plausibility, and reduces computational burden compared with state-of-the-art counterfactual generation and adversarial attack methods.
\end{itemize}

\section{Preliminaries}

\subsection{Node Classification and Graph Neural Networks}
\noindent\textbf{Node Classification in a Graph.}
We consider the task of node classification in a graph, denoted as $G = (V, E, \mathbf{X})$, where $V$ is a set of nodes, $E \subseteq \{(v, w) \mid v, w \in V\}$ is a set of undirected, unweighted edges, and $\mathbf{X} = \{\mathbf{x}_0, \mathbf{x}_1, \ldots, \mathbf{x}_{N-1}\}$ comprises node feature vectors with $\mathbf{x}_i \in \mathbb{R}^d$ for each node $v_i$. The adjacency matrix $\mathbf{A} \in \{0, 1\}^{N \times N}$ has entries $\mathbf{A}_{vw} = 1$ if $(v,w)\in E$ and $0$ otherwise. A subset $V_L \subseteq V$ is labeled, forming training data; each labeled node has a class $y_v \in \mathcal{C} = \{1, \ldots, c\}$. The goal is to predict the label of a target node $v \in V$ in a supervised manner given $\mathbf{A}$ and $\mathbf{X}$. Key mathematical symbols are summarized in Table~\ref{tab:notations} in the Appendix.  

\noindent\textbf{Graph Neural Networks.}
Graph Neural Networks classify nodes through a message-passing scheme \citep{kipf2017semi}. Each node representation is iteratively updated by aggregating and transforming information from its neighbors. For the Graph Convolutional Network (GCN), a prominent GNN, the hidden representation at layer $l+1$ is $\mathbf{H}^{(l+1)} = \sigma\!\left( \hat{\mathbf{A}} \mathbf{H}^{(l)} \mathbf{W}^{(l)} \right),$ 
where $\mathbf{H}^{(0)} = \mathbf{X}$, $\sigma$ is a nonlinear activation, $\mathbf{A}_{\text{self}} = \mathbf{A} + \mathbf{I}_N$ augments the adjacency with self-loops, $\mathbf{D}_{ii} = \sum_j (\mathbf{A}_{\text{self}})_{ij}$ is the degree matrix, and $\hat{\mathbf{A}} = \mathbf{D}^{-\frac{1}{2}} \mathbf{A}_{\text{self}} \mathbf{D}^{-\frac{1}{2}}$. The trainable weights at layer $l$ are $\mathbf{W}^{(l)}$. The final output is obtained by applying a softmax to the last hidden layer $\mathbf{Z} = \text{softmax}\!\left( \hat{\mathbf{A}} \mathbf{H}^{(K)} \mathbf{W}^{(K)} \right),$ 
with $\mathbf{Z} \in \mathbb{R}^{N \times c}$ giving class probability distributions. Row $\mathbf{Z}_v$ is the distribution for node $v$, and the predicted class is $\hat{y}_v = \arg\max \mathbf{Z}_v$.

\subsection{GNN Explanations}
\label{subsec:gnn-explain}
GNN explanation methods \citep{yuan2022explainability,DBLP:journals/csur/LongaASCLLP25}
reveal the structural and feature-based evidence that plays a key role in predictions. We categorize them into two paradigms:

\textbf{Factual explanations} identify subgraphs or features \emph{supporting} the original prediction. For a target node $v$, an explanation subgraph $G_v \subseteq G$ satisfies $f(G_v, \mathbf{X_v}) = f(G, \mathbf{X_v}),$ where $f$ is the GNN model and $\mathbf{X_v}$ denotes the features of $v$. The {\sf GNNExplainer}  method \citep{ying2019gnnexplainer} optimizes $G_v$ to maximize mutual information with the prediction.

\textbf{Counterfactual explanations}
identify \emph{minimal perturbations} $\Delta \mathbf{A}$ to alter target node $v$'s prediction $f(\mathbf{A}, \mathbf{X}, v) \neq f(\mathbf{A} \odot \Delta \mathbf{A}, \mathbf{X}, v), \quad 
\text{s.t. } \quad \|\Delta \mathbf{A}\|_0 \leq \kappa,$
where $\kappa$ is a perturbation budget. 

\subsection{Adversarial Attacks on GNNs}
\label{subsec:gnn-attack}

Adversarial attacks deliberately perturb graphs (including edge-based and feature-based perturbation) to mislead predictions. Key categories include i) evasion and ii) poisoning attacks. 

\textbf{Evasion attacks} modify the graph \emph{during inference} without retraining. For target node $v$, edge-based attackers solve $\max_{\Delta \mathbf{A}} \mathcal{L}(f(\mathbf{A} \odot \Delta \mathbf{A}, \mathbf{X}, v), y_v) 
\quad \text{s.t. } \quad \|\Delta \mathbf{A}\|_0 \leq \kappa,$ 
where $\mathcal{L}$ is the loss function which quantifies prediction error.

\textbf{Poisoning attacks} corrupt the \emph{training graph} to degrade retrained models. For target node $v$, edge-based attackers optimize $\max_{\Delta \mathbf{A}} \mathcal{L}(f_{\theta^*}(\mathbf{A}, \mathbf{X}, v),y_v) \quad \text{s.t.} \quad \theta^* = \arg\min_\theta \mathcal{L}(f_\theta(\mathbf{A} \odot \Delta \mathbf{A}, \mathbf{X})),\quad 
\|\Delta \mathbf{A}\|_0 \leq \kappa$.

Table~\ref{tab:compare} summarizes GNN explanations and adversarial attacks according to edge-based perturbation methods by their core characteristics.

\iffalse
\vspace{-0.5em}
\begin{table}[h]
\centering
\small
\caption{Summarizing GNN explanation and attack paradigms. 
We use $E^-$ to denote edge deletions (removing existing edges from $E$) and $E^+$ to denote edge additions (introducing new edges not present in $E$).}

\begin{tabular}{l|p{2.5cm}|p{6.8cm}}
\hline
\textbf{Category} & \textbf{Goal} & \textbf{Primary Operation \& Example} \\
\hline
Factual Explanation & Explain prediction 
& Identify key subgraph (e.g., {\sf GNNExplainer} \citep{ying2019gnnexplainer}) \\

Counterfactual Explanation & Alter prediction 
& Mainly edge deletion (E$-$) (e.g., {\sf CF-GNNExplainer} \citep{lucic2022cf}) \\

Evasion Attack & Cause misclassification 
& Edge addition/deletion and node modification, mainly edge addition (E$+$) (e.g., {\sf TDGIA} \citep{zou2021tdgia}) \\

Poisoning Attack & Degrade model 
& Edge addition/deletion and node modification, mainly edge addition (E$+$) (e.g., {\sf Nettack} \citep{zugner2018adversarial}) \\
\hline
\end{tabular}
\label{tab:compare}
\end{table}
\vspace{-1em}
\fi

\begin{table}[h]
\centering
\small
\caption{Comparison of GNN explanation and attack paradigms. 
We use $E^-$ to denote edge deletions (removing existing edges) and $E^+$ to denote edge additions (introducing new edges).}
\resizebox{\linewidth}{!}{
\begin{tabular}{l|p{2.7cm}|p{4.8cm}|p{3.5cm}}
\hline
\textbf{Category} & \textbf{Goal} & \textbf{Primary Operation} & \textbf{Example} \\
\hline
Factual Expl. 
& Explain prediction 
& Identify key subgraph 
& \tiny{{\sf GNNExplainer} \citep{ying2019gnnexplainer}} \\

Counterfactual Expl. 
& Alter prediction 
& Mainly $E^-$ (edge deletions), though some recent work includes $E^+$ 
& \tiny{{\sf CF-GNNExplainer} \citep{lucic2022cf}} \\

Evasion Attack 
& Misclassify node 
& $E^+$/$E^-$ in inference, often $E^+$ dominant 
& \tiny{{\sf TDGIA} \citep{zou2021tdgia}} \\

Poisoning Attack 
& Degrade model 
& $E^+$/$E^-$ in training, often $E^+$ dominant 
& \tiny{{\sf Nettack} \citep{zugner2018adversarial}} \\
\hline
\end{tabular}}
\label{tab:compare}
\end{table}

\noindent\textbf{Key Insight.} While counterfactual explanations have historically emphasized \textbf{$E^-$} to reveal model fragility, adversarial attacks often exploit \textbf{$E^+$} by introducing new connections. 
More importantly, the attack literature has developed efficient methods to select which edges to add/delete under small perturbation budgets (e.g., $\kappa=1,\ldots,5$), despite the combinatorially large number of possible additions in graphs, making naive counterfactual search impractical. 
This potential synergy between counterfactual reasoning and attack strategies motivates our problem formulation (\S\ref{sec:prob}) and the unified framework we propose in (\S\ref{sec:methodology}).

\rwThree{In this work, we focus on structural evasion attacks, which directly modify the graph topology at inference time. These attacks are particularly suited to our counterfactual framework, as they avoid retraining and yield interpretable perturbations that align with our goals of sparsity and plausibility.}

\subsection{Problem Formulation}
\label{sec:prob}
Given a graph $G=(V,E,\mathbf{X})$ with adjacency matrix $\mathbf{A}$ and node features $\mathbf{X}$, and a pre-trained GNN classifier $f$, our goal for a target node $v \in V$ is to find a small set of edge perturbations $\Delta \mathbf{E} = \Delta \mathbf{E}^+ \cup \Delta \mathbf{E}^-$, corresponding to additions ($\Delta \mathbf{E}^+$) and deletions ($\Delta \mathbf{E}^-$), such that the prediction for $v$ flips while the resulting counterfactual graph remains \emph{interpretable} and \emph{plausible}. This problem combines two perspectives: from the attack literature, where efficient methods have been developed to select high-impact edge additions under small budgets, and from counterfactual explanations, where minimal and semantically meaningful deletions expose decision-supporting edges. We formalize this hybrid objective in \S\ref{sec:methodology}.

\section{A Dual Approach of Explanations and Attacks for GNNs} \label{sec:synthesis}

We develop a theoretical framework that links targeted structural evasion attacks on graph neural networks with instance-level counterfactual explanation subgraphs. The core objective is to formalize when and why adversarial perturbations can serve as building blocks for counterfactual explanations. To this end, we introduce a hypothesis to capture the relationship between the attack subgraph and the counterfactual explanation of a target node. \textbf{More importantly, we \rwThree{support this hypothesis with gradient-based reasoning and empirical similarity measures} in Appendix~\ref{sec:hypotheses}}. \rwThree{To the best of our knowledge, this hypothesis and supporting evidence are presented for the first time as a plausible and empirically grounded link between adversarial perturbations and counterfactual explanations in graph learning.}

To compare explanation and attack subgraphs, we consider two forms of graph similarity: i) structural similarity~\citep{doan2021interpretable}: overlap in nodes or edges, measurable via graph edit distance, and maximum common subgraph metrics. ii) semantic similarity~\citep{bai2020learning}: closeness in learned graph-level embeddings, indicating similar functional or predictive roles even if the structures differ.  %We use $\text{Sim}(\cdot,\cdot)$ to denote a graph similarity measure by graph edit distance, maximum common subgraph, and graph embedding vectors.

Hypothesis~\ref{hyp:H1} states that the added edges in a successful evasion attack overlap with the most influential edges in a pre-attack counterfactual explanation subgraph. 

\begin{hypothesis}\label{hyp:H1}

For a target node $v$, let $\Delta G(E^+)$ denote the set of added edges in an evasion attack that flips the prediction of $f$, and let $CFEx(G)$ denote the pre-attack counterfactual explanation subgraph of the graph $G$. 
Then, there exists a high graph similarity between $\Delta G(E^+)$ and $CFEx(G)$. 
The proof is provided in the Appendix~\ref{sub:proof-H1}.
\end{hypothesis}

Building on the hypothesis, in Appendix~\ref{sec:hypotheses}, we also present two propositions and two corollaries that formalize when attack-based additions outperform deletions in flipping GNN predictions. These results characterize conditions under which deletions provably fail, yet targeted additions succeed, focusing on the functional advantage of attack-informed counterfactuals.

\section{\modelname: Methodology for Counterfactual Generation}
\label{sec:methodology}
Our objective is to design a counterfactual explainer that simultaneously incorporates \textbf{edge addition} ($E^+$) and \textbf{edge deletion} ($E^-$), combining GNN adversarial attacks with counterfactual explanation concepts. This explainer should generate high-impact perturbations while maintaining interpretability and realism. In particular, we jointly optimize three core objectives: \textbf{Impact} --- efficacy in altering model predictions; \textbf{Sparsity} --- minimal edits for interpretability; 
    \textbf{Plausibility} --- semantic validity of graph modifications.  
Figure \ref{fig:flow} illustrates the end-to-end architecture of our \modelname framework, which unifies adversarial edge perturbations with counterfactual explanation generation through a joint optimization of impact, sparsity, and plausibility.

\begin{figure}[t]
  \centering
  \includegraphics[width=1.15\textwidth]{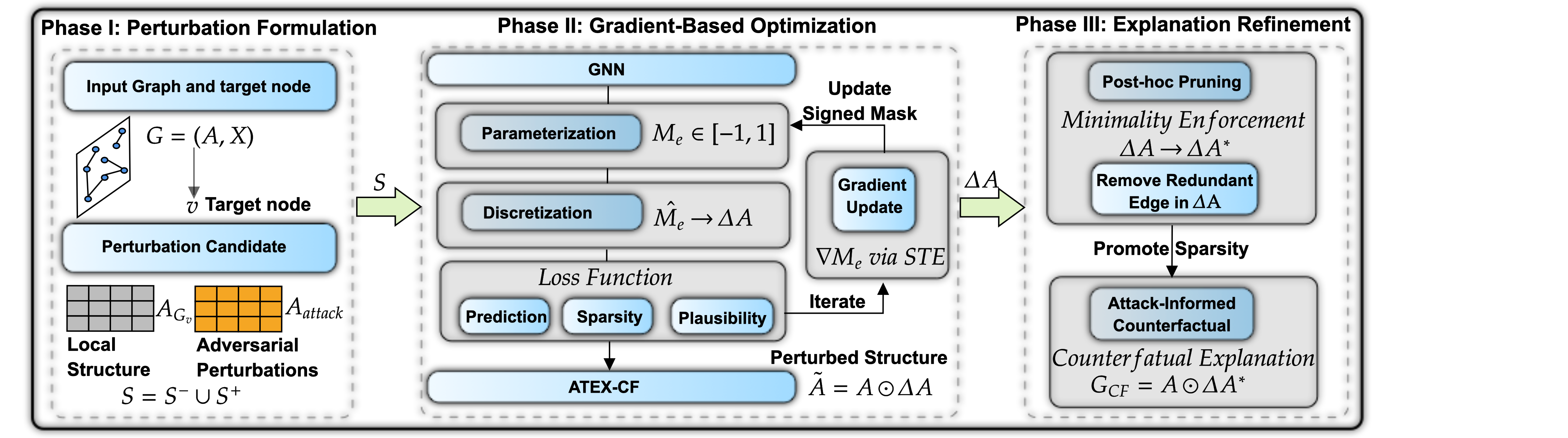}
  \caption{End-to-end workflow of the \modelname framework for counterfactual edge generation.}
  \label{fig:flow}
  \vspace{-5px}
\end{figure}

To operationalize these objectives, we cast counterfactual generation as an optimization problem over edge perturbations. 
Given a candidate set $\mathcal{S}$ of feasible edge edits, we search for $\Delta\mathbf{A}\in\mathcal{S}$ that flips the prediction of the target node while balancing sparsity and plausibility. 
This is achieved by defining a composite loss with three components, corresponding to our objectives.

\noindent\textbf{Loss Function.}  
We formulate counterfactual generation as  
\begin{equation}
\min_{\Delta\mathbf{A}\in\mathcal{S}} \; 
\mathcal{L}(\Delta\mathbf{A}) = 
\lambda_1 \mathcal{L}_{pred}(\Delta\mathbf{A})
+ \lambda_2 \mathcal{L}_{dist}(\Delta\mathbf{A})
+ \lambda_3 \mathcal{L}_{plau}(\Delta\mathbf{A}),
\label{eq:all_loss}
\end{equation}
where $\mathcal{S}$ is the candidate search space. 
Here $\mathcal{L}_{pred}$ enforces label flipping, 
$\mathcal{L}_{dist}$ penalizes the number of edge edits, 
and $\mathcal{L}_{plau}$ enforces plausibility constraints. 
Weights $\lambda_i \ge 0$ balance these terms. Next, we will define each loss function.

\noindent\textbf{Prediction Loss.}  
We denote by $f(\mathbf{A}_v, \mathbf{X}_v; \mathbf{W})$ the prediction for node $v$ under the original adjacency $\mathbf{A}_v$, and by $g(\mathbf{A}_v, \mathbf{X}_v, \mathbf{W}; \Delta\mathbf{A})$ the prediction under a perturbed adjacency $\mathbf{A}_v \odot \Delta\mathbf{A}$. Both share the same weights $\mathbf{W}$; the difference lies only in the perturbation $\Delta\mathbf{A}$.  

To encourage prediction flips, we define the loss as
\begin{multline}
\mathcal{L}_{{pred}}(\Delta\mathbf{A})
= - \, \mathbb{I}\!\left[f(\mathbf{A}_v, \mathbf{X}_v; \mathbf{W}) = f(\mathbf{A}_v \odot \Delta\mathbf{A}, \mathbf{X}_v; \mathbf{W})\right] \\
\cdot \mathcal{L}_{\text{NLL}}\!\left(f(\mathbf{A}_v, \mathbf{X}_v; \mathbf{W}),\, g(\mathbf{A}_v, \mathbf{X}_v, \mathbf{W}; \Delta\mathbf{A})\right).
\label{eq:pred-loss}
\end{multline}

The indicator ensures that the loss is active only when the perturbed graph yields the same prediction as the original. In that case, the negative log-likelihood term penalizes the perturbed prediction, pushing it away from the original class. Once a flip occurs, the loss becomes zero. Although this objective is non-differentiable due to the discrete nature of the indicator function, we employ the straight-through estimator (STE) to enable gradient-based optimization, as detailed in \S \ref{sec:perturbation}. 

\noindent\textbf{Sparsity Loss.}
\label{sec:sparsity}
To encourage concise and interpretable modifications, we impose a sparsity penalty on the number of structural edits. Specifically, we minimize the $\ell_0$ norm of the adjacency change $\Delta\mathbf{A} = \Delta\mathbf{E}^+ \cup \Delta\mathbf{E}^-,$ where $\Delta\mathbf{E}^+$ and $\Delta\mathbf{E}^-$ denote the sets of added and removed edges, respectively. $\mathcal{L}_{dist}(\Delta\mathbf{A})=\|\Delta\mathbf{A}\|_0$.
 
The objective $\mathcal{L}_{dist}(\Delta\mathbf{A})$ measures the total number of edits. By requiring $\|\Delta\mathbf{A}\|_0$ to be small, we keep the modified graph close to the original, curb unnecessary complexity, and reduce overfitting.

\noindent\textbf{Plausibility Loss.}
\label{sec:plausibility}
When generating counterfactual graphs by adding/removing edges, we must control the plausibility of the produced structure. For example, in a citation graph, an old article cannot cite a more recent article. The plausibility penalty discourages unnatural degree/motif changes: $\mathcal{L}_{plau}(\Delta\mathbf{A})=\mathcal{C}(\Delta\mathbf{A})
= \alpha_{deg} \cdot \mathrm{DegAnom}(\Delta \mathbf{A}) + \alpha_{motif} \cdot\mathrm{MotifViol}(\Delta \mathbf{A})$. We tune $\alpha_{deg}$ and $\alpha_{motif}$ to enforce realism; larger $\alpha_{deg}$ avoids implausible degree jumps, larger $\alpha_{motif}$ avoids implausible clustering jumps.

\begin{align}
\mathrm{DegAnom}(\Delta \mathbf{A})
&= \sum_{v_i\in V_{\mathrm{sub}}}\frac{\bigl|\deg_{\mathbf{\tilde A}_{v_i}}(v_i)-\deg_{\mathbf{A}_{v_i}}(v_i)\bigr|}{1+\deg_{\mathbf{A}_{v_i}}(v_i)}, \label{eq:deganom}
\\
\mathrm{MotifViol}(\Delta \mathbf{A})
&= \sum_{v_i\in V_{\mathrm{sub}}}|c_{\mathbf{\tilde A}_{v_i}}(v_i)-c_{\mathbf{A}_{v_i}}(v_i)|. \label{eq:motifviol}
\end{align}

$\mathrm{DegAnom}$ penalizes large relative changes in node degree to prevent structural anomalies, where $\deg_\mathbf{A}(v)$ and $\deg_{\mathbf{\tilde{A}}}(v)$ are degrees of node $v$ before and after modification. $\mathrm{MotifViol}$ penalizes drastic changes in local motifs, measured via clustering coefficients $c_{\mathbf{A}_v}(v)$ and $c_{\mathbf{\tilde A}_v}(v)$.

\subsection{Candidate Selection}
\label{sec:candidate-selection}

As a key aspect in \modelname, we constrain the search space of possible perturbations $\Delta\mathbf{A}$ to a pre-selected candidate set $\mathcal{S}$. This tractable set is constructed through a dual mechanism that incorporates both \textbf{local neighborhood structures} and \textbf{non-local, attack-informed candidates}, balancing interpretability with the ability to discover impactful counterfactuals.

\textbf{Edge Deletion Candidates ($\mathcal{S}^-$):} We follow the principle of \textit{actionability} and \textit{plausibility} \citep{wachter2017counterfactual}; counterfactual explanations should suggest meaningful changes within an entity's sphere of influence (e.g., local graph neighborhood), rather than involving arbitrary, distant entities. As a result, candidate edges for removal are restricted to the existing edges within the $(l+1)$-hop neighborhood $\mathcal{N}^{l+1}(v)$ of the target node $v$, i.e., $\mathcal{S}^{-} = \{e \mid e \in E, e \in \mathcal{N}^{l+1}(v)\}$.

\textbf{Edge Addition Candidates ($\mathcal{S}^{+}$):}
To overcome the limitation of deletion-only approaches and incorporate insights from adversarial attacks, our key innovation is to draw candidate edges for addition from adversarial attack subgraphs. Specifically, we employ the latest \textsc{GOttack} method \citep{alom2025gottack} to generate a set of candidate edges $\Delta A_{\text{attack}}$ for the target node $v$. \textsc{GOttack} identifies influential nodes for edge addition by learning the \textbf{graph orbit characteristics} of nodes that, when connected to $v$, maximally increase the probability of misclassification. An orbit in graph theory represents the role of a node within its local substructure (e.g., a central node in a star graph). The underlying Hypothesis~\ref{hyp:H1} of GOttack, validated by our experiments in Table \ref{tab:sim_correct}, is that nodes occupying similar structural roles (orbits) often have similar predictive influences on the target node. Therefore, edges suggested by \textsc{GOttack} (connecting $v$ to nodes in specific, influential orbits) are both highly impactful and structurally coherent.

\textbf{Final Candidate Set and Local Graph Formation:}
The complete candidate set is the union $\mathcal{S} = \mathcal{S}^{-} \cup \mathcal{S}^{+}$. The adjacency matrix $\mathbf{A}_v$ for the local subgraph used in subsequent optimization (Eq. \ref{eq:pred-loss}) is then formed by combining the original $(\ell+1)$-hop neighborhood structure of $v$ and the adversarial perturbation candidates:
\begin{equation}
\mathbf{A}_v = \underbrace{\mathbf{A}_{G_v}}_{\text{local structure}} + \underbrace{\Delta\mathbf{A}_{\text{attack}}}_{\text{adversarial perturbations}}
\label{eq:av-definition}
\end{equation}
This formulation provides a focused and principled search space $\mathcal{S}$ that is crucial for the efficiency and effectiveness of our counterfactual search algorithm. We use the $(l+1)$-hop neighborhood because an $l$-layer GCN aggregates information from nodes up to $l$ hops away; including the $(l+1)$-hop ensures that all nodes and edges within the target’s effective receptive field—including those that can indirectly influence its representation—are considered as candidates.

\subsection{Signed-Mask Perturbation and Forward Discretization}
\label{sec:perturbation}

After candidate edges are selected, the challenge is to optimize over the discrete choices of additions and deletions. Since direct optimization of binary graph structures is non-differentiable, we employ a continuous signed mask relaxation. In the forward pass, the mask is discretized into $\{-1,0,+1\}$ to yield concrete perturbations, while in backpropagation, the straight-through estimator treats this step as identity, allowing gradients to propagate through discrete edge decisions. This process is carried out as follows and the complete \modelname framework is given in Alg.~\ref{alg:hybrid-cf}.

Each candidate edge $e \in \mathcal{S}$ (where $\mathcal{S}$ is the candidate set defined in \S \ref{sec:candidate-selection}) is associated with a continuous signed parameter $M_e \in [-1,1]$ \rwOne{(Line ~\ref{alg:line:init} of Alg.~\ref{alg:hybrid-cf})}. This parameter encodes both the directionality and the magnitude of the proposed modification; a signed mask variable $M_e$ encodes perturbations, with $M_e>0$ denoting an edge addition ($e\in\Delta\mathbf{E}^+$), $M_e<0$ denoting an edge deletion ($e\in\Delta\mathbf{E}^-$), and $M_e\approx0$ no modification. Here, the sign of $M_e$ indicates the type of operation (addition or deletion), while the magnitude $|M_e|$ reflects the proposed strength or importance of the perturbation. This continuous representation facilitates gradient-based learning.

During the forward pass, we discretize these continuous parameters to obtain a binary perturbation matrix. This process involves two steps: thresholding and sparsity enforcement. First, we apply thresholding to convert $M_e$ into a ternary value. The discretized mask is obtained by thresholding, $\widehat{M}_e = +1$ if $M_e > \tau^+$, $\widehat{M}_e = -1$ if $M_e < -\tau^-$, and $\widehat{M}_e = 0$ otherwise \rwOne{(Line ~\ref{alg:line:disc} of Alg.~\ref{alg:hybrid-cf})}.

where $\tau^+$ and $\tau^-$ are positive thresholds that control the sensitivity for edge addition and deletion, respectively. Typically, we set $\tau^+ = \tau^- = 0.5$ to ensure symmetry.

\begin{wrapfigure}{R}{0.55\linewidth}
\vspace{-1em}
\begin{minipage}{\linewidth}
\hrule height0.8pt \kern2pt
\captionof{algorithm}{\modelname: Counterfactual Generator\vspace{-9px}}
\label{alg:hybrid-cf}
\hrule height0.8pt \kern2pt
\begin{algorithmic}[1]
  \Require Graph $G=(\mathbf{A},X)$, model $f$, target node $v$, candidate set $\mathcal{S}$
  \State Initialize mask $M_e \gets \mathbf{0}$ over $\mathcal{S}$ \label{alg:line:init}
  \For{$t=1$ to $T_{\max}$}
    \State $\widehat{M}_e \gets \textsc{Threshold}(M_e, \tau^+, \tau^-)$ \Comment{Discretize} \label{alg:line:disc}
    \State $\Delta\mathbf{A} \gets \textsc{Top-}\kappa(|M_e|)$ \Comment{Sparsify} \label{alg:line:spar}
    \State Evaluate $\mathcal{L}(M_e)$ on $\mathbf{A} \odot \Delta\mathbf{A}$ \label{alg:line:loss}
    \State $M \gets M - \eta \nabla_M \mathcal{L}(M)$ \Comment{Update via STE} \label{alg:line:ste}
    \If{flipped($v$) \textbf{and} $\|\Delta\mathbf{A}\|_0$ stable} \label{alg:line:end}
       \State \textbf{break}
    \EndIf
  \EndFor
  \State \Return \Call{Prune}{$\Delta\mathbf{A}, G, f, v$} \Comment{See Alg.~\ref{alg:prune}}
\end{algorithmic}
\kern2pt\hrule height0.8pt
\end{minipage}
\vspace{-2.5em}
\end{wrapfigure}

Next, to enforce the perturbation budget constraint $\|\Delta\mathbf{A}\|_0 \leq \kappa$, we retain only the $\kappa$ edges with the largest magnitudes $|M_e|$ and assign their discretized values $\widehat{M}_e \in \{-1,0,+1\}$ to the corresponding entries in the adjacency matrix \rwOne{(Line ~\ref{alg:line:spar} of Alg.~\ref{alg:hybrid-cf})}. The perturbation matrix is defined as $\Delta\mathbf{A}_{i,j}=\widehat{M}_e$ if edge $(i,j)$ is among the top-$\kappa$ candidates ranked by $|M_e|$, and $0$ otherwise. This ensures that at most $\kappa$ edges are modified, producing sparse and interpretable counterfactuals.

The objective loss and resulting perturbed adjacency matrix is then computed as 
$\widetilde{\mathbf{A}} = \mathbf{A} \odot \Delta\mathbf{A}$, 
where the operator $\odot$ applies the signed edge modifications encoded in $\widehat{M}_e \in \{-1,0,+1\}$ \rwOne{(Line ~\ref{alg:line:loss} of Alg.~\ref{alg:hybrid-cf})}. To maintain differentiability through this discretization step, we employ the straight-through gradient estimator (STE) during backpropagation $\frac{\partial \widehat{M}_e}{\partial M_e} \approx 1$. This approximation \rwOne{(Line~\ref{alg:line:ste} of Alg.~\ref{alg:hybrid-cf})} allows gradients to flow directly through the binarization operation, treating the discretization as if it were an identity function in the backward pass \citep{bengio2013estimating}. Consequently, the continuous parameters $M_e$ can be updated using gradient descent, even though the forward pass involves non-differentiable operations. This approach is widely used in training binary neural networks and has been shown to be effective in practice. The loop stops early if the target node $v$ flips and the perturbed edges $\|\Delta\mathbf{A}\|_0$ are stable \rwOne{(Line~\ref{alg:line:end} of Alg.~\ref{alg:hybrid-cf})}.

\noindent\textbf{Minimality-Aware Post-Hoc Pruning}
\label{sec:minimality}
While the training loss promotes sparsity and plausibility in expectation, the discrete relaxation can leave redundant edges active in $\Delta\mathbf{A}$.  This occurs mostly due to noisy or approximate gradient updates that over-compensate. 
To enforce the minimality of counterfactual explanations, we adopt a simple yet effective greedy algorithm \rwOne{(Alg.~\ref{alg:prune} in Appendix \ref{sec:alg2})}. Edges in the candidate set are ranked by their importance score $\psi_e \propto |\partial\mathcal{L}/\partial M_e|$ \rwOne{(approximated gradient magnitude at line~\ref{alg:line:rank} of Alg.~\ref{alg:prune})}. The algorithm then iteratively removes the least important edge, checking if the prediction flip persists \rwOne{(Line~\ref{alg:line:remove} of Alg.~\ref{alg:prune})}. This continues until no more edges can be removed without reverting the prediction, and it attains final perturbation $\Delta\mathbf{A}^*$.

\section{Experiments}

\subsection{Experimental Setup}
\label{sec:experiments_setup}

\begin{table}[t]
\centering
\caption{Dataset statistics.}
\label{tab:datasets}
\scriptsize
\begin{tabular}{lcccccc}
\toprule
\textbf{Dataset} & \textbf{Homophily Ratio} & \textbf{\#Nodes} & \textbf{\#Edges} & \textbf{\#Features} & \textbf{\#Classes} & \textbf{Type} \\
\midrule
BA-SHAPES \citep{ying2019gnnexplainer} & 0.80 & 700 & $3958$ & -- & 4 & Synthetic   \\
TREE-CYCLES \citep{ying2019gnnexplainer} & 0.90 & 871 &1,940 & -- & 2 & Synthetic   \\
Loan-Decision \citep{ma2025c2explainer} & 0.47 & 1000 & $3950$ & 2 & 2 & Synthetic     \\
Cora \citep{sen2008collective} & 0.81 & 2,708 & 5,429 & 1,433 & 7 & Real   \\
\rwTwo{Chameleon \citep{pei2020geom}} & 0.24 & 2,277 & 36,101 & 2,325 & 5 & Real   \\
Ogbn-Arxiv \citep{hu2020open} & 0.66 & 169,343 & 1,166,243 & 128 & 40 & Real   \\
\bottomrule
\end{tabular}
\end{table}

\textbf{Datasets.} 
We evaluate \modelname on both synthetic and real-world benchmarks. 
Synthetic datasets include \textbf{BA-SHAPES} and \textbf{TREE-CYCLES} \citep{ying2019gnnexplainer}, widely used in GNN explainability, and the \textbf{Loan-Decision} social graph \citep{ma2025c2explainer}. For real-world evaluation, we use the \textbf{Cora} citation network \citep{sen2008collective} and the large-scale \textbf{ogbn-arxiv} dataset from OGB \citep{hu2020open}. \rwTwo{Additionally, we include the heterophilic \textbf{Chameleon} dataset \citep{rozemberczki2021multi}, which is known for its low feature homophily and non-community structure, providing a challenging real-world setting for counterfactual explanations.}

\textbf{GNNs.} We evaluate our approach on three standard GNN architectures: \textbf{GCN} \citep{kipf2017semi}, \textbf{GAT} \citep{velickovic2018graph}, and \textbf{Graph Transformer} \citep{shi2020masked}.

\textbf{Baselines}: We compare our method against a comprehensive set of baseline approaches, which we categorize into two groups. The first group comprises \textbf{explanation-based baselines}: \textbf{CF-GNNExplainer} \citep{lucic2022cf}, \rwTwo{\textbf{CF$^2$} \citep{tan2022learning} and \textbf{NSEG} \citep{cai2025probability}}, counterfactual methods that optimize for edge deletions using a perturbation mask; \rwTwo{\textbf{INDUCE} \citep{verma2024induce}, an inductive counterfactual framework that learns structural interventions through local subgraph rewiring, and can naturally realize both edge deletions and edge additions; 
\textbf{C2Explainer} \citep{ma2025c2explainer}, a customizable mask-based counterfactual explainer that jointly optimizes edge and feature perturbations under flexible constraints, supporting both removal and insertion of edges.}
\textbf{GNNExplainer} \citep{ying2019gnnexplainer}, a factual explainer adapted for counterfactual analysis by removing edges in descending order of importance until prediction flips; and \textbf{PGExplainer} \citep{luo2020parameterized}, another factual method adapted similarly to GNNExplainer. The second group consists of \textbf{attack-based baselines} repurposed for counterfactual generation: \textbf{Nettack} \citep{zugner2018adversarial}, a white-box adversarial attack method adapted by using its edge perturbation capability such that the target class is different from the original prediction; and \textbf{GOttack} \citep{alom2025gottack}, a recent adversarial method that leverages graph orbital theory to identify critical nodes for edge additions, making it naturally suited for generating addition-based counterfactuals. For fair comparison, all methods are constrained to a default perturbation budget (i.e., maximum possible number of edge flips) of $\kappa = 5$ edges. We vary $\kappa$ for ablation study in Figure \ref{fig:five_plots}. Explanation-based methods (CF-GNNExplainer, GNNExplainer, PGExplainer) are restricted to edge deletions only, while attack-based methods (Nettack, GOttack) and our \modelname can use both edge additions and deletions within the same budget.

In our experiments, we set the random seed (102, 103, 104) for reproducibility. For the attack model, we employed evasion attacks using the GOttack method. For the \modelname, we used a learning rate of 0.001, trained for 200 epochs, and adopted the SGD optimizer to generate counterfactual explanation with a maximal perturbed budget of 5 edges. The default loss weights were configured as follows: $\lambda_1 = 1.5$, $\lambda_2 = 0.5$, $\lambda_3 = 0.5$, $\alpha_{deg}=1.5$, and $\alpha_{motif}=1.0$. These hyperparameters were chosen to balance prediction flipping, sparsity, and plausibility in counterfactual generation.

\textbf{Evaluation Metrics}: We evaluate the performance of counterfactual explainers in misclassification rate, fidelity, explanation size, plausibility, and time costs. Definitions are given in Appendix~\ref{sec:metrics}.

\begin{table}[t]
\caption{\rwOne{Meta Results. Average ranks ($\downarrow$) across six datasets (lower is better). Ranks are computed per metric per dataset (best=1; ties get the same rank), then averaged across datasets equally. “Wins” counts how many times a method achieved rank one across all metric–dataset cells (6 datasets $\times$5 metrics = 30 cells, ties allowed).}}
\label{tab:meta}
\centering
\scriptsize
\renewcommand{\arraystretch}{1.1}
\begin{tabular}{lccccc|cc}
\toprule
\textbf{Method} & \textbf{Misclass.} & \textbf{Fidelity} & \textbf{$\Delta\mathbf{E}$} & \textbf{Plausibility} & \textbf{Time (sec)} & \textbf{Overall Avg.} & \textbf{Wins} \\
\midrule
CF-GNNExplainer & 4.7 & 4.8 & 2.0 & 2.3 & 9.5 & 4.67 & 1 \\
\rwTwo{INDUCE}          & 6.3 & 6.8 & 4.5 & 7.8 & 2.8 & 5.67 & 1 \\
\rwTwo{C2Explainer}     & 5.0 & 6.2 & 5.7 & 5.8 & 8.7 & 6.27 & 1 \\
\rwTwo{CF$^2$}     & 7.0 & 7.0 & 6.7 & 5.7 & 7.8 & 6.83 & 0 \\
\rwTwo{NSEG}     & 6.7 & 6.7 & 7.5 & 4.3 & 6.7 & 6.37 & 0 \\
GNNExplainer    & 6.8 & 7.5 & 4.7 & 5.5 & 3.3 & 5.57 & 1 \\
PGExplainer     & 8.2 & 7.7 & 4.2 & 5.8 & \textbf{1.0} & 5.37 & 6 \\
Nettack         & 3.3 & 2.5 & 8.8 & 8.0 & 4.5 & 5.43 & 2 \\
GOttack         & 4.8 & 4.3 & 8.8 & 8.0 & 3.0 & 5.80 & 0 \\
\modelname (ours)  & \textbf{1.2} & \textbf{1.3} & \textbf{1.0} & \textbf{1.2} & 7.3 & \textbf{2.40} & \textbf{20} \\
\bottomrule
\end{tabular}
\vspace{-0.5em}
\end{table}

\subsection{Results and Analysis}

We evaluate \modelname against all baselines under the same budget constraints ($\kappa =\{1,\ldots,5\}$). Table~\ref{tab:meta} summarizes average rankings across datasets and metrics. Our method achieves the best overall rank (2.40 vs. 4.67 for the next best) and wins 20/30 metric–dataset combinations, far exceeding competitors. This confirms that \modelname consistently finds more effective counterfactuals. In particular, counterfactual explainers (CF-GNNExplainer, CF$^2$, NSEG, GNNExplainer, PGExplainer) are limited to edge deletions, while counterfactual explainers (INDUCE, C2Explainer), adversarial attack methods (Nettack, GOttack) and \modelname can also add edges. Crucially, CF-GNNExplainer explicitly seeks minimal edge deletions, and GOttack systematically manipulates graph orbits to induce errors, yet neither matches \modelname on combined effectiveness and realism. \textbf{Our empirical results on individual datasets and with other GNNs (GAT and Graph Transformer) are given in Appendix~\ref{sec:ind_data}-\ref{sec:other_gnn}.}

\begin{figure*}[t]
\centering
\includegraphics[scale=0.19]{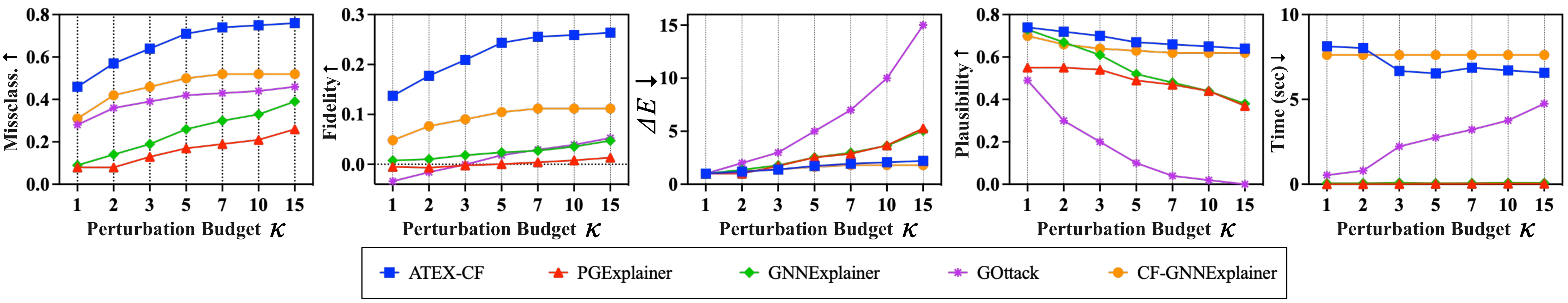}
\caption{Counterfactual explanations on \textbf{Cora} and GCN under varying perturbation budgets $\kappa$}
\label{fig:five_plots}
% \vspace{-10pt}
\end{figure*}

Figure \ref{fig:five_plots} plots performance vs. perturbation budget on Cora. As the budget grows, all methods improve: \modelname quickly raises misclassification (e.g., from 0.46 at $\kappa$=1 to 0.76 at $\kappa$=15) far above others, and maintains the highest fidelity and plausibility. Notably, \modelname’s edit size increases only mildly with $\kappa$, whereas attack baselines must exhaust all allowed edits ($\Delta E$→5). This trend illustrates that our objective effectively exploits additional budget to find better counterfactuals without excessive edits.

\noindent\textbf{Ablation Study.} Table \ref{ablation-study} in Appendix \ref{sec:ablation} shows the effect of removing each loss. Our findings demonstrate that $\mathcal{L}_{dist}$ enforces concise edits, $\mathcal{L}_{plau}$ preserves semantic plausibility, and their combination in \modelname achieves the best overall balance across all metrics.

\noindent\textbf{Sensitivity Analysis.} We next analyze key hyperparameters. \textbf{Search depth ($l$):} Figure \ref{sensitive to layer} in Appendix~\ref{sec:sensitivity} shows that $l$ = 2 captures sufficient local structure surrounding the target node for effective counterfactuals. \textbf{Hyperparameters ($\alpha_{deg}, \alpha_{motif}$):} Figure \ref{hyper_sensitive} in Appendix \ref{sec:sensitivity} demonstrates that \modelname is robust across a range of hyperparameter values (e.g., $\alpha=0.5$–$1.5$); while moderate $\alpha$ maximizes fidelity and plausibility together. 

\noindent\textbf{Impact of Pruning Strategy.} 
We also evaluate the impact of our candidate-edge pruning strategy \rwOne{(Algorithm~\ref{alg:prune} in Appendix \ref{sec:alg2})}. As shown in Figure~\ref{effec_pruning}, pruning yields more concise explanations by reducing redundant edge edits ($\Delta \mathbf{A}=1.71 \to 1.62$), but as intended, its real utility is the reduced runtime (6.12s $\to$ 3.00s), while preserving predictive accuracy (misclass.=$0.71$), plausibility ($0.76$ vs.~$0.75$), making it an effective and efficient enhancement of our framework.

\subsection{Asymmetric Costs of Edge Perturbations}
\label{sec:asym_main}

\rwFour{\paragraph{Feasibility of Edits.} While our framework assumes that edge additions and deletions are possible, not all structural edits are equally realistic in every domain. For example, in citation networks, adding an edge between two papers may be implausible after publication. To handle this, we incorporate a real-valued cost metric into our optimization that allows domain-specific constraints on edit feasibility. }

\rwOne{The default perturbation budget in ATEX-CF assumes symmetric costs for edge additions and deletions, treating both equally under the constraint $\|\Delta \mathbf{A}\|_0 = \|\Delta E^+\| + \|\Delta E^-\| \leq \kappa$. To account for scenarios where additions may be less actionable or more expensive (e.g., in real-world graphs with immutable structures), we introduce a scalar weight $C > 0$ to control the cost asymmetry between perturbation types. The new constraint becomes:}
$$
C \cdot \|\Delta E^+\| + \|\Delta E^-\| \leq \kappa.
$$

\rwOne{We vary $C$ over a range of values (from 0.5 to 21.0) to analyze how penalizing edge additions affects counterfactual quality. Table~\ref{tab:asym20} reports results on Cora using a GCN under budget setting: $\kappa = 20$. More results under different $\kappa$ are provided in Appendix~\ref{sec:asym_appendix}.}

\begin{table}[ht]
\centering
\caption{\rwOne{Counterfactual performance under asymmetric addition cost $C$ with $\kappa=20$.}}
\label{tab:asym20}
\resizebox{\textwidth}{!}{
\begin{tabular}{ccccccc}
\toprule
Addition Cost & Deletion Cost & Misclass. & Fidelity & $\Delta E$ (E$^+$, E$^-$) & Plausibility & Time (sec) \\
\midrule
0.5 & 1.0 & 0.70 & 0.23 & 1.78 (0.78, 1.00) & 0.72 & 6.1 \\
0.8 & 1.0 & 0.70 & 0.23 & 1.78 (0.77, 1.01) & 0.71 & 6.1 \\
1.0 & 1.0 & 0.70 & 0.23 & 1.78 (0.77, 1.01) & 0.71 & 10.3 \\
3.0 & 1.0 & 0.70 & 0.23 & 1.82 (0.66, 1.16) & 0.69 & 10.7 \\
5.0 & 1.0 & 0.70 & 0.23 & 1.82 (0.65, 1.17) & 0.69 & 10.9 \\
10  & 1.0 & 0.69 & 0.23 & 1.80 (0.61, 1.19) & 0.69 & 11.0 \\
15  & 1.0 & 0.69 & 0.23 & 1.76 (0.60, 1.16) & 0.69 & 11.0 \\
20  & 1.0 & 0.54 & 0.15 & 1.42 (0.49, 0.93) & 0.68 & 11.5 \\
21  & 1.0 & 0.42 & 0.10 & 1.78 (0.00, 1.78) & 0.62 & 11.7 \\
\bottomrule
\end{tabular}}
\end{table}

\rwOne{As the addition cost $C$ increases relative to the deletion cost, the optimizer favors deletion-heavy or deletion-only solutions. This leads to a trade-off: higher $C$ reduces misclassification success and fidelity, and in extreme cases ($C \geq 20$), restricts exploration to only deletions, thereby weakening plausibility. These results confirm that asymmetric weighting serves as an effective control knob for tailoring explanation strategies in cost-sensitive domains, but also warn against overly skewed values that impair explanation quality.}

\section{Conclusions}
We presented \modelname, a theoretically grounded framework that unifies adversarial attacks and counterfactual explanations for graph neural networks. By incorporating both edge additions and deletions under a constrained budget, \modelname generates explanations that are not only faithful but also informative. Our joint optimization of fidelity, sparsity, and plausibility ensures instance-level counterfactuals that balance interpretability with realism. Experiments on synthetic and real-world benchmarks confirm the effectiveness of this integration, highlighting how adversarial insights can substantially improve the quality of counterfactual explanations, compared with state-of-the-art counterfactual generation and adversarial attack methods. \rwFour{Future directions include extending \modelname to dynamic graphs, incorporating node feature perturbations, and applying the framework to real-world domains like healthcare and finance.}

%\Yu{The additions and deletions are assumed to be equally feasible, asymmetric costs are not explored. Can the framework handle domains where additions are infeasible or more expensive than deletions? Have you tried asymmetric action costs?}

%\Yu{Moreover, although the authors have considered using plausibility loss to prevent generating out-of-distribution explanations, the OOD problem is actually very domain-specific. Avoiding implausible degree jumps and implausible clustering jumps (namely these techniques used in the paper) can only slightly mitigate these issues. (Actions: I would like to see more discussions regarding this and maybe other solutions to avoid generating OOD explanations). --While degree/clustering penalties are coarse, they serve as a general proxy, practical application can incorporate more strict limitation to avoid out-of-distribution problem.}

%\Yu{Scope: Only node classification; no feature-perturbation counterfactuals; limited real-world user studies for “plausibility.” -- discuss future extensions to feature perturbations and graph classification}

%\Yu{What is the future with this framework or research, what are some future directions to take this or what further implications can be gleaned from this work?  --Future directions include extending ATEX-CF to dynamic graphs, incorporating node feature perturbations, and applying the framework to real-world domains like healthcare and finance.}

\bibliography{CounterAttack}
\bibliographystyle{iclr2026_conference}

\newpage
\appendix
\section*{Reproducibility Statement}
We provide the full implementation of our models and experimental setup to ensure reproducibility. Experimental results are reported as the mean and standard deviation across different random seeds, and the hyperparameters used are detailed in Section~\ref{sec:experiments_setup}. 
\textbf{Our code and data are available at \nolinkurl{https://github.com/zhangyuo/ATEX_CF}.}

\section{Appendix}

\subsection{Limitation}
A key limitation of this study is the assumption that edge additions and deletions are equally feasible, which may not hold in domains where graph modifications are inherently constrained. Future work could incorporate domain-specific constraints and node-feature perturbations to enhance the practical relevance of \modelname while preserving its theoretical contributions. The central premise of our approach is that unifying adversarial attack strategies with counterfactual reasoning strengthens both the fidelity and plausibility of explanations. Unlike methods that treat these perspectives independently, \modelname provides a principled integration that balances model sensitivity with explanation realism in a computationally tractable way.

\rwThree{Our current framework does not consider poisoning attacks or node feature perturbations. Poisoning requires retraining the model after each perturbation, which conflicts with our fixed-model counterfactual setup. Feature perturbations, while useful in some domains, are harder to constrain plausibly and often lack structural interpretability in graph settings. Extending ATEX-CF to incorporate these forms of attacks is a promising direction for future work.}

\rwTwo{We also recognize that the current plausibility loss, based on structural proxies such as degree anomaly and motif violations, may only partially address the risk of generating out-of-distribution explanations. These generic regularizers do not account for domain-specific constraints such as temporal consistency or semantic incompatibility (e.g., citing future papers or linking unrelated functional modules). To mitigate this, we emphasize that our pruning step (Algorithm~\ref{alg:prune}) serves as a late filter that discards non-essential edits, including those that may be structurally valid yet semantically implausible. In practice, pruning significantly reduces the number of active perturbations, helping ensure that final explanations are not only minimal but more likely to remain within the distribution.}

\subsection{Related Work}

\rwOne{We begin a complete account of counterfactual explanations and adversarial examples with their philosophical origins, where both have been studied as forms of contrastive reasoning and causal dependence.}

\rwOne{\citet{freiesleben2022intriguing} examines the connection between counterfactual explanations and adversarial examples in standard machine learning, studying their shared optimization goal and conceptual and historical development in fields such as philosophy and psychology. The analysis remains focused on models with independent input features, where perturbations do not propagate through structured dependencies. Our work builds on this by studying the connection in graph neural networks, where node predictions depend on message passing over edges. In this setting, identifying minimal changes that flip predictions becomes harder, and naive perturbations can easily break plausibility. We address these challenges by using adversarial edge additions to guide counterfactual generation and show that attack-informed edits offer an effective and realistic way to produce explanations in graphs.}

\paragraph{GNN Explanations.}
Different categories of GNN explanation methods have been developed to offer diverse perspectives and improve the interpretability of GNN models \citep{DBLP:conf/dsaa/KhanM23,yuan2022explainability}. 
Two main categories of explanations persist: factual and counterfactual.
\textbf{Counterfactual explanations}, which are the focus of this work, provide explanations by identifying the minimum perturbation or change to the input graph that leads to a different prediction from the model \citep{bajaj2021robust, huang2023global, tan2022learning}, thereby revealing the most critical structures underlying the decision. Existing methods are predominantly based on edge deletions. For instance, {\sf CF-GNNExplainer} \citep{lucic2022cf}, {\sf RCExplainer} \citep{bajaj2021robust}, {\sf GNN-MOExp} \citep{dandl2020multi}, {\sf CF$^2$} \citep{tan2022learning}, {\sf NSEG} \citep{cai2025probability}, {\sf Banzhaf} \citep{chhablani2024game}, and {\sf CF-GFNExplainer} \citep{he2024learning} all design deletion-oriented mechanisms, such as gradient-based mask optimization, decision boundary constraints, multi-objective optimization, or probabilistic sampling. These approaches emphasize faithfulness, sparsity, or necessity/sufficiency guarantees, but rely mainly on removing salient substructures. 

More recently, several works on node classification have extended counterfactual explanations to include edge additions, or the joint use of both addition and deletion. {\sf INDUCE} \citep{verma2024induce} treats counterfactual search as a Markov decision process, allowing the model to learn edge modifications (both additions and deletions) that lead to flips. {\sf C2Explainer} \citep{ma2025c2explainer} further integrates hypergraph representations with straight-through optimization to balance reliability and fidelity, and explicitly models the potential risks of false evidence from edge additions. In the context of graph classification, approaches such as counterfactual graphs \citep{abrate2021counterfactual}, {\sf CLEAR} \citep{ma2022clear}, {\sf GCFExplainer} \citep{kosan2024gcfexplainer}, and density-based counterfactual graphs \citep{abrate2023counterfactual} adopt generative or global search strategies that combine edge addition and deletion to ensure causally consistent and semantically coherent explanations. 

Overall, while edge-deletion-based methods dominate current counterfactual explanation research, the emerging edge-addition or mixed approaches demonstrate that edge addition can serve as a complementary mechanism, especially in cases where deletion-based explanations fail to capture counterfactual reasoning. This motivates our design of hybrid counterfactual explainers that leverage both deletion and addition.
Unlike prior counterfactual methods that may include edge additions, our approach is the first to integrate adversarial attack strategies—systematically leveraging their capacity to identify high-impact edge additions—with traditional deletion-based reasoning, thereby unifying two separately studied domains to generate more effective and actionable explanations. Table~\ref{apx-1} summarizes important GNN explanation methods, including both factual and counterfactual approaches, along with their explanation type, candidate modification, and target task.

\begin{table*}[h]
\caption{Characteristics important GNN explainers including ours. “E , F, N” denote removing/adding edges, node feature modification, removing/adding nodes, respectively. GC and NC denote graph classification and node classification, respectively.}
\label{apx-1}
\centering
\small
\renewcommand{\arraystretch}{1.0}
\begin{tabular}{cccc}
\toprule
{\bf Method} & {\bf Type} & {\bf Candidate} & {\bf Task} \\
\midrule
GNNExplainer \citep{ying2019gnnexplainer} & factual/instance-level & E, N & GC/NC \\
 
PGExplainer \citep{luo2020parameterized} & factual/instance-level & E & GC/NC \\
 
MOO \citep{liu2021multi} & counterfactual/instance-level & E(-), N(-) & NC \\
 
CF-GNNExplainer \citep{lucic2022cf} & counterfactual/instance-level & E(-) & NC \\
 
RCExplainer \citep{bajaj2021robust} & counterfactual/instance-level & E(-) & GC/NC \\
 
CF$^{2}$ \citep{tan2022learning} & counterfactual/instance-level & E(-), F & GC/NC \\
 
INDUCE \citep{verma2024induce} & counterfactual/instance-level & E(+,-) & NC \\
 
NSEG \citep{cai2025probability} & counterfactual/instance-level & E(-), F & GC/NC \\
 
Banzhaf \citep{chhablani2024game} & counterfactual/instance-level & E(-) & NC \\
 
C2Explainer \citep{ma2025c2explainer} & counterfactual/instance-level & E(+,-), F & GC/NC \\
 
\modelname (ours) & counterfactual/instance-level & E(+,-) & NC \\
\bottomrule
\end{tabular}
\end{table*}

\paragraph{GNN Adversarial Attacks.}
Graph adversarial attacks investigate structural perturbations but from a different perspective: their objective is to reduce model performance rather than to improve interpretability. These attacks can be divided into two main categories: evasion attacks and poisoning attacks \citep{yuan2022explainability,DBLP:journals/csur/LongaASCLLP25}.
In evasion attacks, the GNN parameters are fixed and the adversary perturbs the test graph to flip predictions without retraining. Examples include targeted edge modifications during inference \citep{zou2021tdgia, chang2020restricted, ma2020towards, fan2023jointly}. Poisoning attacks, in contrast, manipulate the training data by injecting adversarial samples, forcing the retrained model to internalize the perturbations and degrade performance \citep{alom2025gottack, zugner2018adversarial, li2021adversarial, chen2018fast, geisler2021robustness}.

Empirical studies show that adversarial evasion attacks on GNNs --- particularly those based on strategically adding edges --- exploit data biases and model weaknesses to induce misclassifications, in stark contrast to counterfactual explanations, which predominantly rely on edge deletions. Integrating these attack‐inspired edge‐addition perturbations into counterfactual frameworks can enrich explanation graphs and forge a novel link between adversarial robustness and interpretability.

\paragraph{Pitfalls in Explanations}

\rwTwo{Faber et al.~\cite{faber2021comparing} identify five pitfalls that affect the evaluation of GNN explanation methods: (1) the GNN may rely on bias terms or spurious features rather than the annotated evidence, (2) the ground-truth explanation may be redundant or non-unique, leading to mismatches during scoring, (3) some datasets allow trivial explanations, such as those based on nearest neighbors or centrality, (4) weak models that do not learn the true structure render all explanation assessments unreliable, and (5) explanation behavior may vary significantly across architectures even with similar accuracy. These concerns are valid for attribution-based explanations evaluated against fixed motifs, but they do not apply to our setting. Our method generates counterfactuals by identifying sparse edge perturbations that flip the model’s prediction. We do not assume or require ground-truth substructures, nor do we compare explanations to predefined motifs. Instead, our evaluation reflects the model’s actual behavior and is based on plausibility and edit sparsity. This makes our approach robust to the ground-truth mismatches discussed in~\cite{faber2021comparing, agarwal2023evaluating}.}

\paragraph{Fusing GNN Explanations and Robustness against Attacks.}

Recent efforts on robust explainable graph neural networks combine explainability with adversarial defense to preserve explanation quality under worst-case perturbations. GNNEF \citep{DBLP:conf/icml/LiPD0W24} reveals that perturbation-based explainers (e.g., GNNExplainer, PGExplainer) are highly fragile, as minor structural changes can drastically alter explanations without affecting predictions, and proposes loss- and deduction-based attacks exposing this vulnerability across both graph- and node/edge-level tasks. \citet{fan2023jointly} develop GEAttack that can attack both a GNN model and its explanations
by simultaneously exploiting their vulnerabilities. \citet{ChandaGS25} exploit explainability-based strategy to devise adversarial attacks on GNNs. Complementarily,  \citet{DBLP:journals/corr/abs-2505-02566} introduce a benchmark analyzing the interplay between robustness and interpretability under poisoning and evasion attacks, showing that most defenses improve interpretability but with architecture-dependent trade-offs and limitations in existing metrics. Building on these insights, XGNNCert \citep{DBLP:conf/iclr/LiPD0W25} provides the first certifiable robustness guarantee for graph-level tasks, ensuring stable explanations without sacrificing predictive performance. At the node/edge level, $k$-RCW \citep{DBLP:conf/icde/QiuW0W24} proposes robust counterfactual witnesses (RCWs) that remain factual, counterfactual, and resilient to structural disturbances, while GNNNIDS \citep{DBLP:conf/itasec/GalliVMM25} introduces an evaluation framework for intrusion detection via structural adversarial attacks, demonstrating that Integrated Gradients produces precise yet exploitable explanations. While these works improve the robustness of explanations, to the best of our knowledge, we are the first to unify adversarial attack techniques such as both edge additions and deletions for better counterfactual explanation generation.

\subsection{Summary of notations used in this paper}

Table~\ref{tab:compact_notations} provides a concise summary of the key notations used in this paper, covering graph structure, node features, GNN models, optimization terms, and theoretical concepts.

\begin{table}[h]
\centering
\caption{Summary of notations used in this paper}
\label{tab:compact_notations}
\begin{tabular}{@{}lL@{}}
\toprule
\textbf{Symbol Group} & \textbf{Description} \\
\midrule
\multicolumn{2}{c}{\textbf{Graph Structure}} \\
$G, V, E$ & Input graph, node set, edge set \\ 
$N, m$ & Number of nodes and edges ($N = |V|$, $m = |E|$) \\
$\mathbf{A}, \mathbf{A}_{self}$ & Adjacency matrix $\mathbf{A}\in\{0,1\}^{n\times n}$, adjacency matrix with self-loops ($\mathbf{A} + \mathbf{I}_N$) \\
$\mathbf{D}, \hat{\mathbf{A}}$ & Degree matrix, normalized adjacency matrix ($\mathbf{D}^{-\frac{1}{2}}\mathbf{A}_{self}\mathbf{D}^{-\frac{1}{2}}$) \\
$\widetilde{\mathbf{A}}, \Delta\mathbf{A}$ & Perturbed adjacency matrix ($\mathbf{A} \odot \Delta\mathbf{A}$), edge modifications ($\in\{-1,0,1\}^{n\times n}$) \\
$\Delta\mathbf{E}^+, \Delta\mathbf{E}^-$ & Added/deleted edge sets \\
\midrule
\multicolumn{2}{c}{\textbf{Node Features, Neighborhood, \& Labels}} \\
$\mathcal{N}^{l}(v)$ & $l$-hop neighborhood of node $v$ \\
$\mathbf{X}$ & Node feature matrix ($\in\mathbb{R}^{n\times d}$) \\
$v, y_v, \hat{y}_v$ & Target node, ground-truth label, predicted label \\
\midrule
\multicolumn{2}{c}{\textbf{GNN Model}} \\
$\mathbf{W}^{(l)}, \mathbf{H}^{(l)}$ & Weight matrix and hidden representations at GNN layer $l$ \\
$\mathbf{Z}$ & Output logits \\
$f(\mathbf{A}, \mathbf{X}, v)$ & GNN prediction for node $v$  \\
\midrule
\multicolumn{2}{c}{\textbf{Optimization \& Loss}} \\
$\mathcal{L}(\bullet)$ & Loss objective function \\
$\mathcal{L}_{{pred}}, \mathcal{L}_{{dist}}$ & Prediction loss (flipping), sparsity loss (minimal edits) \\
$\mathcal{L}_{{plau}}, \mathcal{C}(\Delta\mathbf{A})$ & Plausibility loss, plausibility penalty \\
$\|\Delta\mathbf{A}\|_0, \kappa$ & Number of changed edges, perturbation budget \\
$M_e, \widehat{M}_e$ & Continuous signed mask ($\in[-1,1]$), discretized mask ($\in\{-1,0,1\}$) \\
$\tau^+, \tau^-$ & Positive/negative thresholds for discretization \\
$\psi_e, \mathcal{S}$ & Edge importance score, candidate modification set \\
$\lambda_1,\lambda_2,\lambda_3$ & Loss trade-off weights \\
$\alpha_{deg}, \alpha_{motif}$ & Realism penalty weights \\
$\eta$ & Learning rate \\
\midrule
\multicolumn{2}{c}{\textbf{Theoretical Concepts}} \\
$m_v$ & Prediction margin \\
$g_e$ & Gradient influence: $\frac{\partial m_v}{\partial A_e}$ \\
$\mathbf{A}_v, \mathbf{D}_v, \mathbf{\tilde{D}}_v$ & Local adjacency, degree matrix, perturbed degree matrix for node $v$ \\
$c_\mathbf{A}(v), c_{\mathbf{\tilde{A}}}(v)$ & Clustering coefficient (original/perturbed) for node $v$ \\
\bottomrule
\end{tabular}
\label{tab:notations}
\end{table}

\subsection{Minimality-Aware Post-Hoc Pruning: Algorithm 2}
\label{sec:alg2}

\rwOne{After the main optimization phase completes, the resulting perturbation $\Delta\mathbf{A}$ may contain extraneous edges introduced by approximation errors in gradient updates or overly cautious thresholding. To refine the explanation and ensure that every retained edge meaningfully contributes to the prediction flip, we apply a post-hoc minimality-aware pruning procedure, formalized in Algorithm~\ref{alg:prune}. This algorithm iteratively ranks edges in $\Delta\mathbf{A}$ by their estimated importance (measured via gradient magnitude) and removes them greedily in ascending order of this score. An edge is retained only if its removal would reverse the counterfactual prediction, ensuring that the final perturbation $\Delta\mathbf{A}^*$ is irreducible by construction.}

\rwOne{This refinement step improves the explanation's conciseness without requiring retraining or re-optimization. In practice, we observe that pruning reduces the average number of edits from $1.71$ to $1.62$, while maintaining the same level of predictive fidelity and plausibility. It also yields a significant runtime improvement (6.12s to 3.00s) due to the smaller edge set being evaluated downstream (Figure~\ref{effec_pruning}). Thus, pruning serves both an interpretability function by enforcing parsimony, and a practical one by improving efficiency.}

\begin{algorithm}[t]
\caption{Minimality Pruning}
\label{alg:prune}

\textbf{Input:} Perturbation $\Delta\mathbf{A}$, graph $G$, model $f$, target node $v$\\
\textbf{Output:} Minimal perturbation $\Delta\mathbf{A}^*$

\begin{enumerate}
  \item Initialize: $\Delta\mathbf{A}^* \gets \Delta\mathbf{A}$
  \item Rank edges in $\Delta\mathbf{A}^*$ by importance score $\psi_e$ (descending) \label{alg:line:rank}
  \item \textbf{for each} edge $e_i$ in ascending order of $\psi_e$: \label{alg:line:remove}
    \begin{enumerate}
      \item $\Delta\mathbf{A}' \gets \Delta\mathbf{A}^* \setminus \{e_i\}$ (tentatively remove)
      \item \textbf{if} $f(\mathbf{A} \odot \Delta\mathbf{A}',v) \neq f(\mathbf{A},v)$:
        \begin{enumerate}
          \item $\Delta\mathbf{A}^* \gets \Delta\mathbf{A}'$ (keep the smaller perturbation)
        \end{enumerate}
    \end{enumerate}
  \item \textbf{return} $\Delta\mathbf{A}^*$
\end{enumerate}

\end{algorithm}

\subsection{Evaluation Metrics}
\label{sec:metrics}
\begin{itemize}[noitemsep,nolistsep,leftmargin=*]
    \item \textbf{Misclassification Rate}: 
     It measures the fraction of predictions flipped by perturbations. Higher values indicate stronger disruption, consistent with the \emph{attack success rate} widely used in GNN adversarial attacks, such as Nettack \citep{zugner2018adversarial} and GOttack \citep{alom2025gottack}.
    \begin{align}
    \text{Misclassification Rate}=\tfrac{1}{N}\sum_{i=1}^{N}\mathbb{I}(\hat{y}_i^{1-m_i}\neq c_i),
    \end{align}
    where $N$ is the number of evaluated target nodes, $c_i=f(\mathbf{A}, \mathbf{X}, v_i)$ denotes the model-predicted class of target node $v_i$. $\hat{y}_i^{1-m_i}=f(\mathbf{\tilde A},\mathbf{X}, v_i)$, $\mathbf{\tilde A}$ is the perturbed adjacency, $m_i$ is the explanation mask (edges added/removed), $\mathbb{I}$ is the indicator function.
    
    \item \textbf{Fidelity}: 
    This metric measures the prediction confidence drop on the model’s predicted class $c_i$ \citep{bajaj2021robust}. Formally:
    \begin{align}
    \text{Fidelity}=\tfrac{1}{N}\sum_{i=1}^{N}\big(f(\mathbf{A}, \mathbf{X}, v_i)_{c_i}-f(\mathbf{\tilde A},\mathbf{X}, v_i)_{c_i}\big),
    \end{align}
    where $f(\mathbf{A}, \mathbf{X}, v)_{c}$ denotes the softmax probability assigned to class $c$. Unlike the binary Misclassification Rate, which captures label flips, Fidelity provides a finer-grained sensitivity analysis by quantifying how perturbations reduce the model’s confidence in its own prediction.

    \item \textbf{Explanation Size $\Delta\mathbf{E}$}: It represents the average number of structural modifications (including both edge additions and deletions) made per counterfactual explanation, calculated as: 
    \begin{align}
    \Delta\mathbf{E}=\tfrac{1}{n}\sum_{i=1}^{n} \Delta\mathbf{E_i}=\tfrac{1}{n}\sum_{i=1}^{n} (\Delta\mathbf{E_i}^+ + \Delta\mathbf{E_i}^-),
    \end{align}
    We report the average over successful counterfactuals $n$ since $\Delta\mathbf{E}_i$ is well-defined only when a valid counterfactual is generated, ensuring that the metric reflects the true complexity of feasible explanations rather than being diluted by failed cases \citep{lucic2022cf,DBLP:conf/www/TanGFGX0Z22}.
    Here, $\Delta\mathbf{E_i}$ represents the set of perturbed edges for node $v_i$. Smaller values indicate more compact and interpretable explanations. 

    \item \textbf{Plausibility}: This evaluates the human-interpretable quality of counterfactual explanations by assessing their realism and coherence with domain knowledge. The plausibility score is averaged across $n$ successful counterfactuals:
    \begin{align}
    \text{Plausibility}=\tfrac{1}{n}\sum_{i=1}^{n}S_{{plau}}^{(i)},\quad 
    S_{{plau}}^{(i)}=2\cdot\Big(1-\tfrac{1}{1+\exp(-k\cdot L_{{plau}}^{(i)})}\Big),
    \end{align}
    where $S_{{plau}}^{(i)}\in(0,1)$ is the plausibility score for target node $v_i$, $k$ is a scaling factor (default $k=1$), and $L_{{plau}}^{(i)}\in(0,\infty)$ encodes domain-specific constraints quantifying the realism of the counterfactual. Higher values indicate more plausible explanations.
    In our experiments, $L_{{plau}}^{(i)}$ is instantiated using the definition in Eq.\ref{eq:deganom} and Eq.\ref{eq:motifviol} in \S \ref{sec:plausibility}, ensuring consistency with our evaluation setup. More generally, $L_{{plau}}^{(i)}$ serves as a flexible placeholder that can incorporate task-specific structural and semantic constraints to assess the realism of counterfactuals in diverse domains.

    \item \textbf{Time Cost}: We record the average running time required in seconds to generate a counterfactual explanation for a single node, providing insights into the computational efficiency of different methods.
\end{itemize}

\begin{table}[t]
\caption{\rwOne{Performance of counterfactual explanations on \textbf{Cora} and GCN.}}
\label{c6-1}
\centering
\renewcommand{\arraystretch}{1.1}
\scriptsize
\begin{tabular}{lcccC{2.3cm}cc}
\hline
\textbf{Method} & 
\textbf{Base GNN} & 
\textbf{Misclass. $\uparrow$} & 
\textbf{Fidelity $\uparrow$} & 
\textbf{$\Delta\mathbf{E}(\mathbf{E}^+,\mathbf{E}^-)$ $\downarrow$} & 
\textbf{Plausibility $\uparrow$} & 
\textbf{Time (sec) $\downarrow$} \\
\hline
\hspace{1em}CF-GNNExplainer & GCN & 0.49$\pm$0.013 & 0.1060$\pm$0.0034  & 1.70$\pm$0.08 (0.00, 1.70) & 0.64$\pm$0.008 & 10.21$\pm$2.88 \\
\hspace{1em}\rwTwo{INDUCE} & GCN & 0.17$\pm$0.008 & 0.0256$\pm$0.0014  & 2.66$\pm$0.04 (1.50, 1.16) & 0.27$\pm$0.008 & 0.88$\pm$0.08 \\
\hspace{1em}\rwTwo{C2Explainer} & GCN & 0.61$\pm$0.029 & 0.2116$\pm$0.0046  & 1.95$\pm$0.03 (0.58, 1.37) & 0.68$\pm$0.008 & 8.25$\pm$0.12 \\
\hspace{1em}\rwTwo{CF$^2$} & GCN & 0.40$\pm$0.012 & 0.0813$\pm$0.0077  & 2.66$\pm$0.07 (0.00, 2.66) & 0.61$\pm$0.016 & 7.08$\pm$0.20 \\
\hspace{1em}\rwTwo{NSEG} & GCN & 0.33$\pm$0.012 & 0.0815$\pm$0.0077  & 3.26$\pm$0.15 (0.00, 3.26) & 0.58$\pm$0.008 & 3.47$\pm$0.14 \\
\hspace{1em}GNNExplainer    & GCN & 0.22$\pm$0.016 &0.0197$\pm$0.0150  & 2.58$\pm$0.13 (0.00, 2.58)  & 0.53$\pm$0.021  & 0.44$\pm$0.52 \\
\hspace{1em}PGExplainer     & GCN & 0.14$\pm$0.009 & -0.0010$\pm$0.0017  & 2.38$\pm$0.03 (0.00, 2.38)  & 0.53$\pm$0.005  & \textbf{0.04$\pm$0.02} \\
\hline 
\multicolumn{7}{l}{\textbf{Attack Models}} \\
\hspace{1em}Nettack         & GCN & 0.53$\pm$0.005 & 0.1484$\pm$0.0057  & 5.00$\pm$0.00 (3.86, 1.14)  & 0.13$\pm$0.005  & 3.36$\pm$0.85 \\
\hspace{1em}GOttack         & GCN & 0.53$\pm$0.005 & 0.1466$\pm$0.0043  & 5.00$\pm$0.00 (4.70, 0.30)  & 0.10$\pm$0.000  & 2.24$\pm$0.81 \\
\hline
\modelname(Ours)   & GCN & \textbf{0.72$\pm$0.008}     & \textbf{0.2336$\pm$0.0003} & \textbf{1.63$\pm$0.01 (0.90, 0.73)} & \textbf{0.75$\pm$0.008}      & 7.26$\pm$2.5 \\
\hline
\end{tabular}
\vspace{-1em}
\end{table}

\vspace{-0.5em}
\begin{table}[t]
\caption{\rwOne{Performance of counterfactual explanations on \textbf{BA-SHAPES} and GCN.}}
\label{c6-2}
\centering
\renewcommand{\arraystretch}{1.1}
\scriptsize
\begin{tabular}{lcccC{2.3cm}cc}
\hline
\textbf{Method} & 
\textbf{Base GNN} & 
\textbf{Misclass. $\uparrow$} & 
\textbf{Fidelity $\uparrow$} & 
\textbf{$\Delta\mathbf{E}(\mathbf{E}^+,\mathbf{E}^-)$ $\downarrow$} & 
\textbf{Plausibility $\uparrow$} & 
\textbf{Time (sec) $\downarrow$} \\
\hline
\multicolumn{7}{l}{\textbf{Explainers}} \\
\hspace{1em}CF-GNNExplainer & GCN & 0.64$\pm$0.017 & 0.3383$\pm$0.0079  & 1.33$\pm$0.20 (0.00, 1.33) & 0.57$\pm$0.012 & 11.30$\pm$3.72 \\
\hspace{1em}\rwTwo{INDUCE} & GCN & 0.70$\pm$0.005 & 0.3475$\pm$0.0140  & 1.64$\pm$0.02 (0.40, 1.24) & 0.28$\pm$0.008 & 0.25$\pm$0.02 \\
\hspace{1em}\rwTwo{C2Explainer} & GCN & 0.59$\pm$0.008 & 0.1426$\pm$0.0058  & 3.79$\pm$0.01 (0.02, 3.77) & 0.08$\pm$0.005 & 1.55$\pm$0.02 \\
\hspace{1em}\rwTwo{CF$^2$} & GCN & 0.58$\pm$0.012 & 0.1264$\pm$0.0052  & 2.87$\pm$0.05 (0.00, 2.87) & 0.07$\pm$0.008 & 5.22$\pm$0.17 \\
\hspace{1em}\rwTwo{NSEG} & GCN & 0.19$\pm$0.005 & 0.0269$\pm$0.0009  & 3.36$\pm$0.10 (0.00, 3.36) & 0.67$\pm$0.008 & 2.71$\pm$0.09 \\
\hspace{1em}GNNExplainer    & GCN & 0.65$\pm$0.022 &0.3055$\pm$0.0019  & 1.83$\pm$0.15 (0.00, 1.83)         & 0.34$\pm$0.009 & 0.81$\pm$1.03 \\
\hspace{1em}PGExplainer     & GCN & 0.73$\pm$0.031 & 0.3672$\pm$0.0015  & 1.45$\pm$0.05 (0.00, 1.45)         & 0.41$\pm$0.075         & \textbf{0.03$\pm$0.01} \\
\hline
\multicolumn{7}{l}{\textbf{Attack Models}} \\
\hspace{1em}Nettack         & GCN &0.64$\pm$0.005 &0.3526$\pm$0.0063  &5.00$\pm$0.00 (4.08, 0.92)              &0.22$\pm$0.0036              &0.89$\pm$0.38 \\
\hspace{1em}GOttack         & GCN &0.63$\pm$0.012 &0.3399$\pm$0.0081  &5.00$\pm$0.00 (4.30, 0.70)              &0.32$\pm$0.008              &0.73$\pm$0.22 \\
\hline
\modelname(Ours)   & GCN & \textbf{0.83$\pm$0.009}     & \textbf{0.4237$\pm$0.0118}  & \textbf{1.24$\pm$0.02 (1.21, 0.03)} & \textbf{0.71$\pm$0.000}      & 8.96$\pm$0.43 \\
\hline
\end{tabular}
\end{table}

\begin{table}[t]
\caption{\rwOne{Performance of counterfactual explanations on \textbf{TREE-CYCLES} and GCN.}}
\label{c6-3}
\centering
\renewcommand{\arraystretch}{1.1}
\scriptsize
\begin{tabular}{lcccC{2.3cm}cc}
\hline
\textbf{Method} & 
\textbf{Base GNN} & 
\textbf{Misclass. $\uparrow$} & 
\textbf{Fidelity $\uparrow$} & 
\textbf{$\Delta\mathbf{E}(\mathbf{E}^+,\mathbf{E}^-)$ $\downarrow$} & 
\textbf{Plausibility $\uparrow$} & 
\textbf{Time (sec) $\downarrow$} \\
\hline
\multicolumn{7}{l}{\textbf{Explainers}} \\
\hspace{1em}CF-GNNExplainer & GCN & 0.49$\pm$0.054 & 0.3437$\pm$0.0422  & 1.95$\pm$0.03 (0.00, 1.95) & 0.34$\pm$0.005 & 6.16$\pm$2.09 \\
\hspace{1em}\rwTwo{INDUCE} & GCN & 0.53$\pm$0.005 & 0.3194$\pm$0.0065  & 2.76$\pm$0.01 (0.89, 1.87) & 0.22$\pm$0.009 & \textbf{0.01$\pm$0.00} \\
\hspace{1em}\rwTwo{C2Explainer} & GCN & \textbf{0.74$\pm$0.005} & 0.2783$\pm$0.0116  & 3.09$\pm$0.07 (0.13, 2.96) & 0.14$\pm$0.005 & 10.15$\pm$0.32 \\
\hspace{1em}\rwTwo{CF$^2$} & GCN & 0.46$\pm$0.019 & 0.2531$\pm$0.0128  & 3.83$\pm$0.05 (0.00, 3.83) & 0.31$\pm$0.012 & 9.16$\pm$0.07 \\
\hspace{1em}\rwTwo{NSEG} & GCN & 0.45$\pm$0.012 & 0.2175$\pm$0.0094  & 3.76$\pm$0.07 (0.00, 3.76) & 0.29$\pm$0.012 & 5.34$\pm$0.20 \\
\hspace{1em}GNNExplainer    & GCN & 0.53$\pm$0.085 & 0.3608$\pm$0.0637  & 2.57$\pm$0.31 (0.00, 2.57) & 0.26$\pm$0.041 & 0.70$\pm$0.93 \\
\hspace{1em}PGExplainer     & GCN & 0.41$\pm$0.033 & 0.2733$\pm$0.0288  & 2.52$\pm$0.12 (0.00, 2.52) & 0.31$\pm$0.022 & \textbf{0.01$\pm$0.00} \\
\hline
\multicolumn{7}{l}{\textbf{Attack Models}} \\
\hspace{1em}Nettack         & GCN & 0.58$\pm$0.022 & \textbf{0.4508$\pm$0.0217}  & 5.00$\pm$0.00 (4.34, 0.66) & 0.27$\pm$0.099 & 0.58$\pm$0.17 \\
\hspace{1em}GOttack         & GCN & 0.18$\pm$0.005 & 0.1083$\pm$0.0033  & 5.00$\pm$0.00 (4.91, 0.09) & 0.21$\pm$0.016 & 0.41$\pm$0.09 \\
\hline
\modelname(Ours)    & GCN & 0.58$\pm$0.009 & 0.4052$\pm$0.0221  & \textbf{1.29$\pm$0.07 (0.69, 0.60)} & \textbf{0.64$\pm$0.009} & 2.98$\pm$1.41 \\
\hline
\end{tabular}
\end{table}

\begin{table}[t]
\caption{\rwOne{Performance of counterfactual explanations on \textbf{Loan-Decision} and GCN.}}
\label{c6-4}
\centering
\renewcommand{\arraystretch}{1.1}
\scriptsize
\begin{tabular}{lcccC{2.3cm}cc}
\hline
\textbf{Method} & 
\textbf{Base GNN} & 
\textbf{Misclass. $\uparrow$} & 
\textbf{Fidelity $\uparrow$} & 
\textbf{$\Delta\mathbf{E}(\mathbf{E}^+,\mathbf{E}^-)$ $\downarrow$} & 
\textbf{Plausibility $\uparrow$} & 
\textbf{Time (sec) $\downarrow$} \\
\hline
\multicolumn{7}{l}{\textbf{Explainers}} \\
\hspace{1em}CF-GNNExplainer & GCN & 0.45$\pm$0.092 & 0.2520$\pm$0.0490  & 1.35$\pm$0.20 (0.00, 1.35) & 0.53$\pm$0.038 & 56.00$\pm$7.04 \\
\hspace{1em}\rwTwo{INDUCE} & GCN & 0.18$\pm$0.102 & 0.0873$\pm$0.0516  & 3.23$\pm$0.76 (1.21, 2.01) & 0.39$\pm$0.169 & 1.68$\pm$0.21 \\
\hspace{1em}\rwTwo{C2Explainer} & GCN & 0.00$\pm$0.000 & --  & -- & -- & 39.88$\pm$1.23 \\
\hspace{1em}\rwTwo{CF$^2$} & GCN & 0.17$\pm$0.005 & 0.0774$\pm$0.0038  & 3.58$\pm$0.08 (0.00, 3.58) & 0.43$\pm$0.012 & 31.76$\pm$0.57 \\
\hspace{1em}\rwTwo{NSEG} & GCN & 0.18$\pm$0.008 & 0.0891$\pm$0.0022  & 3.74$\pm$0.11 (0.00, 3.74) & 0.45$\pm$0.012 & 26.43$\pm$0.50 \\
\hspace{1em}GNNExplainer    & GCN & 0.16$\pm$0.048 & 0.0438$\pm$0.0497  & 2.56$\pm$0.29 (0.00, 2.56) & 0.42$\pm$0.017 & 3.33$\pm$4.36 \\
\hspace{1em}PGExplainer     & GCN & 0.10$\pm$0.008 & 0.0281$\pm$0.0105  & 2.80$\pm$0.34 (0.00, 2.80) & 0.21$\pm$0.024 & \textbf{0.32$\pm$0.04} \\
\hline
\multicolumn{7}{l}{\textbf{Attack Models}} \\
\hspace{1em}Nettack         & GCN & 0.34$\pm$0.005 & 0.1685$\pm$0.0075  & 5.00$\pm$0.00 (3.01, 1.99)  & 0.24$\pm$0.017  & 1.16$\pm$0.43 \\
\hspace{1em}GOttack         & GCN & 0.35$\pm$0.005 & 0.1742$\pm$0.0053  & 5.00$\pm$0.00 (4.25, 0.75)  & 0.15$\pm$0.005  & 0.52$\pm$0.16 \\
\hline
\modelname(Ours)    & GCN & \textbf{0.68$\pm$0.024} & \textbf{0.3658$\pm$0.0171}  & \textbf{1.27$\pm$0.02 (0.38, 0.89)} & \textbf{0.67$\pm$0.026}  & 20.33$\pm$0.58 \\
\hline
\end{tabular}
\vspace{-1em}
\end{table}

\vspace{-0.5em}
\begin{table}[t]
\caption{\rwOne{Performance of counterfactual explanations on \textbf{ogbn-arxiv} and GCN.
}}
\label{c6-5}
\centering
\renewcommand{\arraystretch}{1.1}
\scriptsize
\begin{tabular}{lcccC{2.3cm}cc}
\hline
\textbf{Method} & 
\textbf{Base GNN} & 
\textbf{Misclass. $\uparrow$} & 
\textbf{Fidelity $\uparrow$} & 
\textbf{$\Delta\mathbf{E}(\mathbf{E}^+,\mathbf{E}^-)$ $\downarrow$} & 
\textbf{Plausibility $\uparrow$} & 
\textbf{Time (sec) $\downarrow$} \\
\hline
\multicolumn{7}{l}{\textbf{Explainers}} \\
\hspace{1em}CF-GNNExplainer & GCN & 0.45$\pm$0.033 & 0.0791$\pm$0.0072  & 1.56$\pm$0.06 (0.00, 1.56) & 0.66$\pm$0.005 & 7.16$\pm$1.71 \\
\hspace{1em}\rwTwo{INDUCE} & GCN & 0.28$\pm$0.066 & 0.0130$\pm$0.0015  & 2.17$\pm$0.24 (1.12, 1.05) & 0.41$\pm$0.009 & 0.15$\pm$0.02 \\
\hspace{1em}\rwTwo{C2Explainer} & GCN & 0.88$\pm$0.009 & 0.2552$\pm$0.0091  & 1.62$\pm$0.05 (0.64, 0.98) & 0.70$\pm$0.008 & 8.61$\pm$1.70 \\
\hspace{1em}\rwTwo{CF$^2$} & GCN & 0.42$\pm$0.012 & 0.0700$\pm$0.0100  & 2.85$\pm$0.04 (0.00, 2.85) & 0.48$\pm$0.021 & 6.99$\pm$0.05 \\
\hspace{1em}\rwTwo{NSEG} & GCN & 0.64$\pm$0.017 & 0.1956$\pm$0.0058  & 3.53$\pm$0.10 (0.00, 3.53) & 0.61$\pm$0.008 & 3.38$\pm$0.16 \\
\hspace{1em}GNNExplainer    & GCN & 0.33$\pm$0.014 & 0.0136$\pm$0.0040  & 2.22$\pm$0.07 (0.00, 2.22) & 0.63$\pm$0.008 & \textbf{0.10$\pm$0.01} \\
\hspace{1em}PGExplainer     & GCN & 0.26$\pm$0.012 & 0.0206$\pm$0.0056  & 2.25$\pm$0.13 (0.00, 2.25) & 0.59$\pm$0.022 & \textbf{0.10$\pm$0.05} \\
\hline
\multicolumn{7}{l}{\textbf{Attack Models}} \\
\hspace{1em}Nettack         & GCN & 0.86$\pm$0.009 & \textbf{0.3366$\pm$0.0059}  & 5.00$\pm$0.00 (4.38, 0.62) & 0.58$\pm$0.029 & 2.14$\pm$0.69 \\
\hspace{1em}GOttack         & GCN & 0.85$\pm$0.022 & 0.3251$\pm$0.0107  & 5.00$\pm$0.00 (5.00, 0.00) & 0.62$\pm$0.012 & 0.63$\pm$0.08 \\
\hline
\modelname(Ours)   & GCN & \textbf{0.90$\pm$0.012} & 0.3251$\pm$0.0023  & \textbf{1.20$\pm$0.05 (0.92, 0.28)} & \textbf{0.73$\pm$0.017} & 3.35$\pm$0.17 \\
\hline
\end{tabular}
\end{table}

\subsection{Experimental Results on Individual Datasets}
\label{sec:ind_data}

Across all datasets, \modelname flips the most target nodes while using very few edits. On Cora (Table \ref{c6-1}), \modelname achieves a misclassification rate of 0.72 with only 1.63 average edge changes, compared to only 0.53 for both Nettack and GOttack (each being forced to flip 5 edges). Our fidelity (0.2336) and plausibility (0.75) are also the highest. In contrast, PGExplainer is extremely fast (0.04s) but flips almost no nodes, and attack methods (Nettack/GOttack) flip all 5 edges but yield very low plausibility ($\approx 0.1–0.13$). A similar pattern holds on BA-Shapes (Table \ref{c6-2}) and Tree-Cycles (Table \ref{c6-3}), which are motif-based synthetic graphs. On BA-Shapes, \modelname attains 0.83 misclassification with $\Delta E$=1.24 and plausibility 0.71, clearly outperforming others; on Tree-Cycles, it achieves 0.58 misclassification vs. 0.74 for C2Explainer, but with far higher plausibility (0.64 vs. 0.14) and much smaller edits ($\Delta E$=1.29 vs. 3.09). These synthetic benchmarks have no node features and explicit motif structures, and \modelname reliably discovers the minimal motif changes needed.

On the Loan-Decision social graph (Table \ref{c6-4}), \modelname again dominates: 0.68 misclassification (vs. $\le 0.45$ for others) and highest fidelity (0.3658) with only $\Delta E$=1.27. Finally, on the large real ogbn-arxiv network (Table \ref{c6-5}), \modelname flips 0.90 fraction of nodes vs. 0.85–0.88 for C2Explainer and attacks, yet uses just $\Delta E$=1.20 edges (C2Explainer uses 1.62 and attacks use 5) and achieves plausibility 0.73 (vs. 0.58–0.70). The ogbn-arxiv dataset is a citation graph of CS papers with 128-dimensional features and 40 classes, confirming \modelname scales to large, feature-rich graphs. 

\rwTwo{Finally, to further assess generalization beyond homophilic benchmarks, we include experiments on the heterophilic Chameleon dataset (Table~\ref{c6-6}). Chameleon poses additional challenges due to feature heterophily and non-community structure. Nonetheless, \modelname achieves the highest misclassification rate (0.84) and fidelity (0.2595) while requiring only 1.64 edits—substantially fewer than other baselines, which require more edits (vs. 1.81-5). \modelname also maintains high plausibility (0.76), comparable to the best-performing explainers yet with significantly stronger effectiveness.}

\rwTwo{These results confirm that \modelname remains robust across both homophilic and heterophilic graphs.} In summary, our method consistently finds compact counterfactual edits that flip more predictions than baselines, yielding higher fidelity while preserving realistic graph structure.

\subsection{Experimental results with Graph Transformer and GAT}
\label{sec:other_gnn}

Tables~\ref{ap-gt-1} and~\ref{ap-gat-1} further demonstrate that \modelname remains consistently superior on both Graph Transformer \citep{shi2020masked}  and GAT \citep{velickovic2018graph} backbones. On Graph Transformer (Table~\ref{ap-gt-1}), our method achieves the highest misclassification rate (0.44) and plausibility (0.50), while also maintaining competitive edit compactness ($\Delta \mathbf{E}=1.66$). On GAT (Table~\ref{ap-gat-1}), \modelname shows an even clearer margin, boosting misclassification to 0.47 and plausibility to 0.65, outperforming all baselines by a large gap. These results confirm that our mask optimization generalizes beyond GCNs, remaining stable and effective across different architectures, including both attention-based and transformer-based GNNs.

Moreover, Tables \ref{ap-gt-1} and \ref{ap-gat-1} show that applying CF-GNNExplainer to attention-based models such as GAT and Graph Transformer often results in unstable mask optimization. This instability arises because, unlike GCN where the normalized adjacency enters linearly into the convolution allowing effective gradient flow from the loss to the mask, attention-based architectures compute edge attention coefficients via nonlinear transformations (LeakyReLU, softmax). Any mask applied to edge weights is absorbed and scaled by $\alpha_{ij}(1-\alpha_{ij}) \ll 1$, leading to vanishing gradient signals and preventing the identification of meaningful counterfactual edges.

In {\sf CF-GNNExplainer}, the adjacency mask $P$ is treated as a continuous parameter (after a sigmoid), which scales the edge weight or serves as an edge attribute. In attention-based models, these edge attributes enter the computation of attention logits $z_{ij}$:
\begin{equation}
z_{ij} = s_{ij} + b \cdot e_{ij}, \quad e_{ij} = \sigma(P_{ij}),
\end{equation}
where $s_{ij}$ is a feature-derived score, $b$ is a scalar, and $\sigma$ is the sigmoid. The normalized attention coefficient is
\begin{equation}
\alpha_{ij} = \frac{\exp(z_{ij})}{\sum_{k \in \mathcal{N}(i)} \exp(z_{ik})}.
\end{equation}
The gradient of the loss $L$ with respect to $P_{ij}$ is then
\begin{equation}
\frac{\partial L}{\partial P_{ij}} = 
\frac{\partial L}{\partial \alpha_{ij}} \cdot
\underbrace{\frac{\partial \alpha_{ij}}{\partial z_{ij}}}_{\alpha_{ij}(1-\alpha_{ij})} \cdot
\underbrace{\frac{\partial z_{ij}}{\partial e_{ij}}}_{b} \cdot
\underbrace{\frac{\partial e_{ij}}{\partial P_{ij}}}_{\sigma'(P_{ij})}.
\end{equation}
The critical term is the Jacobian of the softmax:
\[
\frac{\partial \alpha_{ij}}{\partial z_{ij}} = \alpha_{ij}(1-\alpha_{ij}).
\]
When the degree of node $i$ is $N$ and neighbors are similar, $\alpha_{ij} \approx 1/N$, thus $\alpha_{ij}(1-\alpha_{ij}) \approx \mathcal{O}(1/N)$.
This means that the mask gradient is strongly diluted by $1/N$, which is further multiplied by the sigmoid derivative $\sigma'(P_{ij})$. As a result, the gradient magnitude quickly vanishes, especially for high-degree nodes. This explains why {\sf CF-GNNExplainer} struggles to optimize adjacency masks in attention-based models.

\begin{table}[t]
\caption{\rwOne{Performance of counterfactual explanations on \textbf{Loan-Decision} and Graph Transformer.}}
\label{ap-gt-1}
\centering
\renewcommand{\arraystretch}{1.1}
\scriptsize
\begin{tabular}{lcccC{2.3cm}cc}
\hline
\textbf{Method} & 
\textbf{Base GNN} & 
\textbf{Misclass. $\uparrow$} & 
\textbf{Fidelity $\uparrow$} & 
\textbf{$\Delta\mathbf{E}(\mathbf{E}^+,\mathbf{E}^-)$ $\downarrow$} & 
\textbf{Plausibility $\uparrow$} & 
\textbf{Time (sec) $\downarrow$} \\
\hline
CF-GNNExplainer & Graph Trans. & -- & --  & -- & -- & -- \\
\rwTwo{INDUCE} & Graph Trans. & 0.02$\pm$0.012 & 0.1264$\pm$0.0044  & 3.57$\pm$0.43 (2.10, 1.47) & 0.29$\pm$0.000 & 0.35$\pm$0.09 \\
\rwTwo{C2Explainer} & Graph Trans. & 0.06$\pm$0.009 & 0.1549$\pm$0.0165  & 2.95$\pm$0.16 (2.36, 0.59) & \textbf{0.53$\pm$0.033} & 8.02$\pm$0.29 \\
\rwTwo{CF$^2$} & Graph Trans. & 0.05$\pm$0.005 & 0.1402$\pm$0.0052  & 3.61$\pm$0.07 (0.00, 3.61) & 0.46$\pm$0.021 & 7.03$\pm$0.07 \\
\rwTwo{NSEG} & Graph Trans. & 0.07$\pm$0.012 & 0.1701$\pm$0.0245  & 3.89$\pm$0.07 (0.00, 3.89) & 0.44$\pm$0.022 & 4.10$\pm$0.10 \\
GNNExplainer    & Graph Trans. & 0.30$\pm$0.039 & 0.2452$\pm$0.0513  & 2.99$\pm$0.27 (0.00, 2.00) & 0.36$\pm$0.012 & 4.77$\pm$3.45 \\
PGExplainer     & Graph Trans. & 0.39$\pm$0.007 & 0.3187$\pm$0.0141  & \textbf{1.66$\pm$0.27 (0.00, 1.66)} & 0.45$\pm$0.031 & \textbf{0.05$\pm$0.03} \\
Nettack         & Graph Trans. & 0.32$\pm$0.004 & 0.2509$\pm$0.0101  & 5.00$\pm$0.00 (4.03, 0.97) & 0.22$\pm$0.016 & 1.03$\pm$0.48 \\
GOttack         & Graph Trans. & 0.31$\pm$0.006 & 0.2420$\pm$0.0072  & 5.00$\pm$0.00 (4.81, 0.19) & 0.30$\pm$0.005 & 0.66$\pm$0.21 \\
\hline
\modelname(Ours)   & Graph Trans. & \textbf{0.44$\pm$0.035} & \textbf{0.3563$\pm$0.0120}  & \textbf{1.66$\pm$0.01 (0.85, 0.81)} & 0.50$\pm$0.024 & 8.07$\pm$2.41 \\
\hline
\end{tabular}
\end{table}

\begin{table}[t]
\caption{\rwOne{Performance of counterfactual explanations on \textbf{Loan-Decision} and GAT.}}
\label{ap-gat-1}
\centering
\renewcommand{\arraystretch}{1.1}
\scriptsize
\begin{tabular}{lcccC{2.3cm}cc}
\hline
\textbf{Method} & 
\textbf{Base GNN} & 
\textbf{Misclass. $\uparrow$} & 
\textbf{Fidelity $\uparrow$} & 
\textbf{$\Delta\mathbf{E}(\mathbf{E}^+,\mathbf{E}^-)$ $\downarrow$} & 
\textbf{Plausibility $\uparrow$} & 
\textbf{Time (sec) $\downarrow$} \\
\hline
CF-GNNExplainer & GAT & -- & --  & -- & -- & -- \\
\rwTwo{INDUCE} & GAT & 0.18$\pm$0.115 & 0.0420$\pm$0.0280  & 2.64$\pm$0.18 (1.63, 1.01) & 0.38$\pm$0.061 & 0.24$\pm$0.01 \\
\rwTwo{C2Explainer} & GAT & 0.08$\pm$0.021 & 0.0108$\pm$0.0055  & 3.03$\pm$0.18 (2.46, 0.57) & 0.52$\pm$0.033 & 6.42$\pm$0.09 \\
\rwTwo{CF$^2$} & GAT & 0.07$\pm$0.008 & 0.0126$\pm$0.0029  & 3.35$\pm$0.04 (0.00, 3.35) & 0.47$\pm$0.017 & 5.11$\pm$0.09 \\
\rwTwo{NSEG} & GAT & 0.12$\pm$0.008 & 0.0297$\pm$0.0014  & 3.68$\pm$0.09 (0.00, 3.68) & 0.45$\pm$0.016 & 3.13$\pm$0.07 \\
GNNExplainer    & GAT & 0.01$\pm$0.005 & 0.0002$\pm$0.0315  & 3.00$\pm$0.00 (0.00, 3.00) & 0.34$\pm$0.015 & 3.40$\pm$1.67 \\
PGExplainer     & GAT & 0.08$\pm$0.032 & 0.0057$\pm$0.0012  & 3.39$\pm$0.00 (0.00, 3.39) & 0.46$\pm$0.092 & \textbf{0.06$\pm$0.01} \\
Nettack         & GAT & 0.41$\pm$0.006 & 0.0781$\pm$0.0078  & 5.00$\pm$0.12 (3.93, 1.07) & 0.20$\pm$0.014 & 1.01$\pm$0.51 \\
GOttack         & GAT & 0.32$\pm$0.007 & 0.0689$\pm$0.0043  & 5.00$\pm$0.04 (4.80, 0.20) & 0.18$\pm$0.005 & 0.60$\pm$0.18 \\
\hline
\modelname(Ours)   & GAT & \textbf{0.47$\pm$0.021} & \textbf{0.0892$\pm$0.0193}  & \textbf{1.58$\pm$0.03 (0.67, 0.91)} & \textbf{0.65$\pm$0.019} & 4.32$\pm$1.95 \\
\hline
\end{tabular}
\vspace{-1em}
\end{table}

\vspace{-0.5em}
\begin{table}[t]
\caption{\rwOne{Performance of counterfactual explanations on \textbf{Chameleon} and GCN.
}}
\label{c6-6}
\centering
\renewcommand{\arraystretch}{1.1}
\scriptsize
\begin{tabular}{lcccC{2.3cm}cc}
\hline
\textbf{Method} & 
\textbf{Base GNN} & 
\textbf{Misclass. $\uparrow$} & 
\textbf{Fidelity $\uparrow$} & 
\textbf{$\Delta\mathbf{E}(\mathbf{E}^+,\mathbf{E}^-)$ $\downarrow$} & 
\textbf{Plausibility $\uparrow$} & 
\textbf{Time (sec) $\downarrow$} \\
\hline
\multicolumn{7}{l}{\textbf{Explainers}} \\
\hspace{1em}CF-GNNExplainer & GCN & 0.39$\pm$0.033 & 0.0858$\pm$0.0134  & 1.81$\pm$0.02 (0.00, 1.81) & \textbf{0.81$\pm$0.054} & 375.39$\pm$8.38 \\
\hspace{1em}\rwTwo{INDUCE} & GCN & 0.03$\pm$0.005 & 0.0096$\pm$0.0004  & 2.33$\pm$0.10 (1.50, 0.83) & 0.44$\pm$0.033 & 18.23$\pm$0.29 \\
\hspace{1em}\rwTwo{C2Explainer} & GCN & 0.09$\pm$0.016 & 0.0596$\pm$0.0026  & 2.61$\pm$0.07 (2.17, 0.44) & 0.76$\pm$0.008 & 85.69$\pm$1.53 \\
\hspace{1em}\rwTwo{CF$^2$} & GCN & 0.21$\pm$0.025 & 0.0879$\pm$0.0091  & 3.51$\pm$0.12 (0.00, 3.51) & 0.62$\pm$0.024 & 73.31$\pm$2.40 \\
\hspace{1em}\rwTwo{NSEG} & GCN & 0.30$\pm$0.025 & 0.0870$\pm$0.0069  & 3.86$\pm$0.09 (0.00, 3.86) & 0.61$\pm$0.021 & 42.25$\pm$1.54 \\
\hspace{1em}GNNExplainer    & GCN & 0.01$\pm$0.005 & 0.0046$\pm$0.0023  & 3.50$\pm$0.41 (0.00, 3.50) & 0.53$\pm$0.008 & 11.28$\pm$0.28 \\
\hspace{1em}PGExplainer     & GCN & 0.01$\pm$0.005 & 0.0003$\pm$0.0002  & 2.83$\pm$2.09 (0.00, 2.83) & 0.45$\pm$0.328 & \textbf{1.79$\pm$0.10} \\
\hline
\multicolumn{7}{l}{\textbf{Attack Models}} \\
\hspace{1em}Nettack         & GCN & 0.51$\pm$0.008 & 0.1806$\pm$0.0008  & 5.00$\pm$0.00 (4.73, 0.27) & 0.29$\pm$0.012 & 38.23$\pm$0.70 \\
\hspace{1em}GOttack         & GCN & 0.52$\pm$0.029 & 0.1944$\pm$0.0010  & 5.00$\pm$0.00 (4.87, 0.13) & 0.43$\pm$0.012 & 9.26$\pm$0.13 \\
\hline
\modelname(Ours)            & GCN & \textbf{0.84$\pm$0.008} & \textbf{0.2595$\pm$0.0004}  & \textbf{1.64$\pm$0.01 (1.16, 0.48)} & 0.76$\pm$0.005 & 134.73$\pm$4.15 \\
\hline
\end{tabular}
\end{table}

\begin{table}[ht]
\caption{Ablation Study on \textbf{Cora} and GCN.}
\label{ablation-study}
\centering
\renewcommand{\arraystretch}{1.1}
\scriptsize
\begin{tabular}{lcccC{2.3cm}cc}
\hline
\textbf{Method} & 
\textbf{Base GNN} & 
\textbf{Misclass. $\uparrow$} & 
\textbf{Fidelity $\uparrow$} & 
\textbf{$\Delta\mathbf{E}(\mathbf{E}^+,\mathbf{E}^-)$ $\downarrow$} & 
\textbf{Plausibility $\uparrow$} & 
\textbf{Time (sec) $\downarrow$} \\
\hline
w/o $\mathcal{L}_{dist}$ & GCN & 0.70 & 0.2360 & 1.66 (0.76, 0.90) & 0.71 & 7.42 \\
w/o $\mathcal{L}_{plau}$ & GCN & 0.68 & 0.2225 & \textbf{1.57 (0.72, 0.85)} & 0.71 & 6.82 \\
w/o $\mathcal{L}_{dist}$ and $\mathcal{L}_{plau}$ & GCN & 0.69 & \textbf{0.2469} & 1.60 (0.59, 1.01) & 0.68 & 6.22 \\
\hline
\modelname(Ours) & GCN & \textbf{0.71} & 0.2336 & 1.62 (0.92, 0.70) & \textbf{0.75} & \textbf{3.80} \\
\hline
\end{tabular}
\vspace{-1em}
\end{table}

\begin{figure}[t]
  \centering
  \includegraphics[width=0.85 \textwidth]{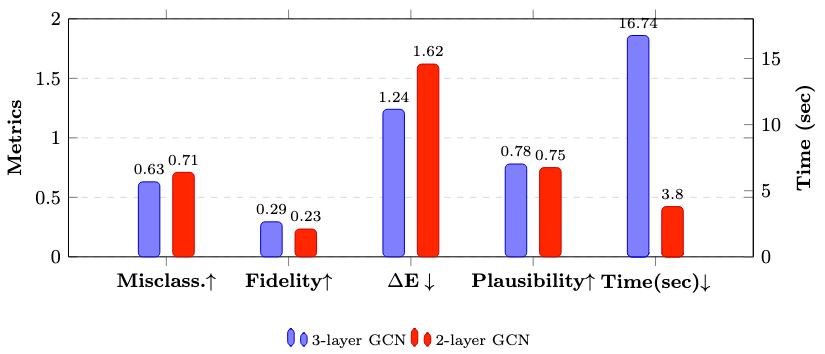}
  \caption{Performance of counterfactual explanations vs. the number of GNN layers: The results demonstrate sensitivity w.r.t. the number of hops for the local structure surrounding the target node.}
  \vspace{-5px}
  \label{sensitive to layer}
\end{figure}

\begin{figure}[t]
  \centering
  \includegraphics[width=0.7 \textwidth]{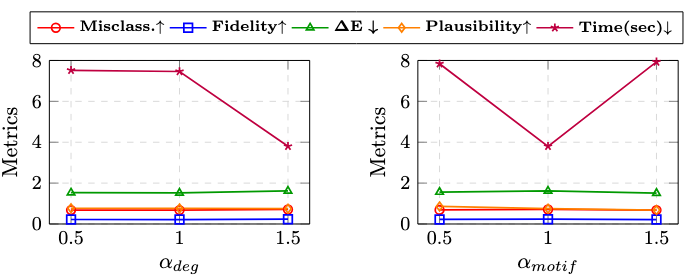}
  \caption{Sensitivity w.r.t. Hyperparameters $\alpha_{deg}$ and $\alpha_{motif}$.}
  \vspace{-5px}
  \label{hyper_sensitive}
\end{figure}

\begin{figure}[ht]
  \centering
  \includegraphics[width=0.85 \textwidth]{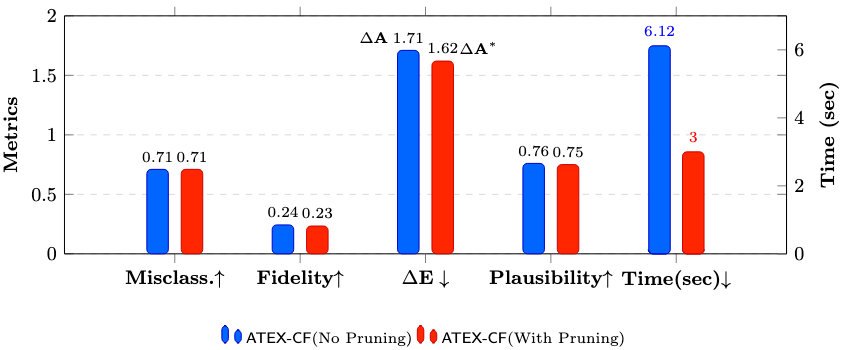}
  \caption{Effectiveness of Post-Hoc Pruning.}
  \vspace{-5px}
  \label{effec_pruning}
\end{figure}

In contrast, \modelname uses a \emph{signed, discrete mask} $M_{ij} \in \{-1,0,+1\}$ combined with a straight-through estimator (STE) for backpropagation:
\begin{equation}
\tilde{A}_{ij} = \tilde{A}_{ij}^{\text{discrete}} + \big(M_{ij}^{\text{cont}} - \mathrm{sg}(M_{ij}^{\text{cont}})\big), \quad
\frac{\partial L}{\partial M_{ij}^{\text{cont}}} \approx \frac{\partial L}{\partial \tilde{A}_{ij}},
\end{equation}
where $\tilde{A}_{ij}$ is the perturbed adjacency entry for edge $(i,j)$, $\tilde{A}_{ij}^{\text{discrete}}$ is the discrete (binary) adjacency entry, $M_{ij}^{\text{cont}}$ is the continuous mask, $\mathrm{sg}(\cdot)$ denotes the stop-gradient operator that blocks gradients, and $L$ is the model loss.
This allows gradients to capture the \emph{finite-difference effect} of adding or deleting an edge on the loss, without attenuation from softmax or nonlinearities. As a result, the signed mask optimization remains stable and effective across both GCNs and attention-based models, enabling reliable counterfactual explanations.

\subsection{A Case study on Counterfactuals}

\rwTwo{To illustrate the refinement effect of pruning, Figure~\ref{fig:before_after_explanation} compares the initial and final counterfactual explanations for a target node. Before the pruning, the explanation includes several edges of varying influence, some of which are unnecessary for achieving the prediction flip. The attack procedure discards these redundant edits by greedily testing their necessity. In this case, the final counterfactual achieves its goal by deleting the edge $(1978, 1306)$ and adding a single edge $(1978, 190)$. This minimal perturbation is sufficient to alter the model's decision.}
\begin{figure}[ht]
\centering
\begin{minipage}{0.48\textwidth}
  \centering
  \includegraphics[width=\linewidth]{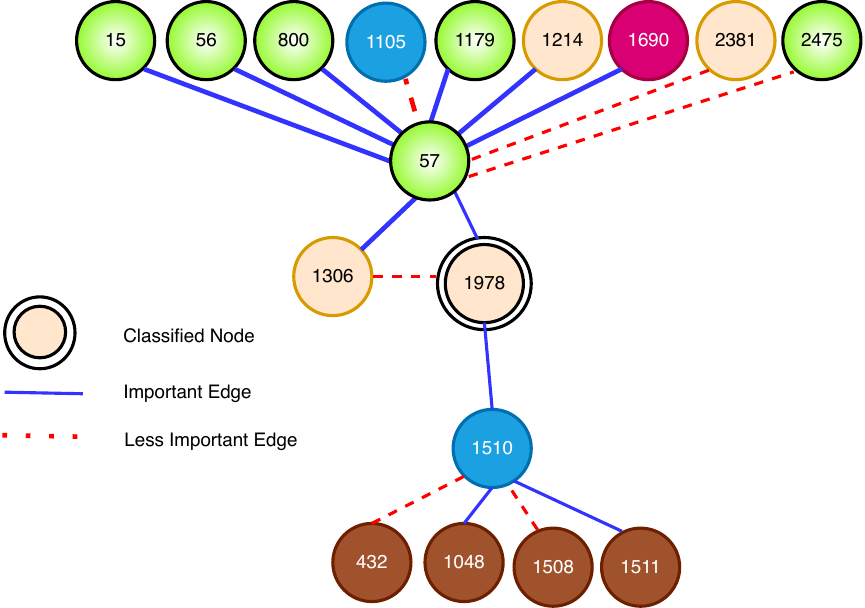}
  \caption*{(a) Before attack}
\end{minipage}
\hfill
\begin{minipage}{0.48\textwidth}
  \centering
  \includegraphics[width=\linewidth]{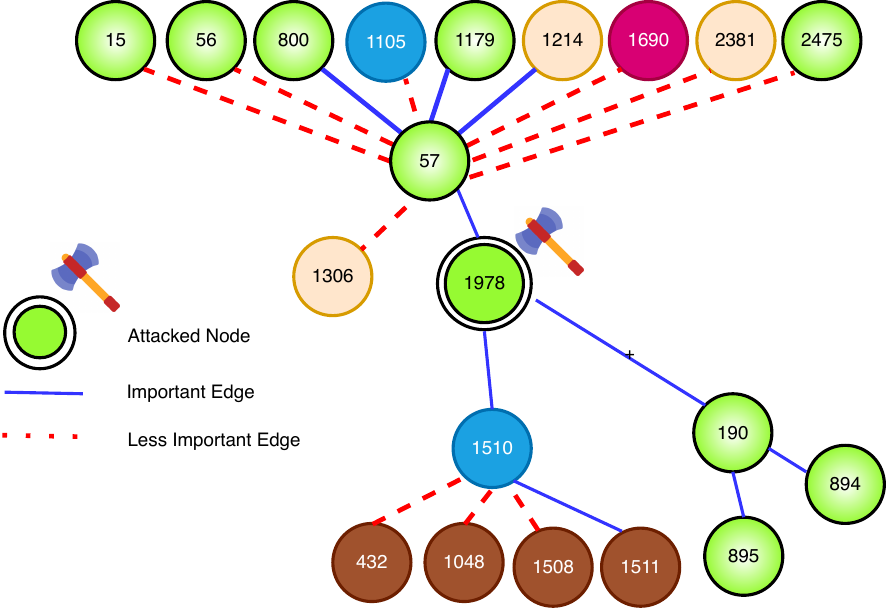}
  \caption*{(b) After attack}
\end{minipage}
\caption{\rwTwo{Visual comparison of the changed graph when the edge 1978 is classified (panel a) and its counterfactual edges (panel b). Node colors indicate class membership. The initial classification includes both important and less important edges. The counterfactual deletes edge (1978, 1306) and adds edge (1978, 190) to successfully flip the prediction for 1978.}}
\label{fig:before_after_explanation}
\end{figure}

\subsection{Ablation Study} 
\label{sec:ablation}

Table \ref{ablation-study} shows the effect of removing each loss on Cora. Removing $\mathcal{L}_{dist}$ reduces misclassification slightly (0.71 → 0.70), leading to larger edit sets (1.62 → 1.66) and lower plausibility (0.75 → 0.71), indicating that edit minimality is compromised. Omitting the plausibility loss ($\mathcal{L}_{plau}$) yields the smallest edit size (1.57), but severely hurts misclassification, dropping the rate to 0.68, as edits no longer respect semantic structure. Removing both losses reduces misclassification and plausibility. These findings demonstrate that $\mathcal{L}_{dist}$ enforces concise edits, $\mathcal{L}_{plau}$ preserves semantic plausibility, and their combination in \modelname achieves the best overall balance across all metrics.

\paragraph{Ablation on Attack Candidate Selection.} 
\rwOne{We conducted ablations by replacing the GOttack-based candidate selection with two alternative strategies: (i) random sampling and (ii) Nettack-based edge proposals. These modifications isolate the impact of attack-informed candidate generation on counterfactual quality.}

\begin{table}[ht]
\centering
\caption{\rwOne{Effect of attack candidate generation strategy on counterfactual explanations (Cora, GCN, $\kappa = 5$).}}
\label{tab:candidate_ablation}
\resizebox{\linewidth}{!}{
\begin{tabular}{lcccccc}
\toprule
Method & Base GNN & Misclass. & Fidelity & $\Delta E$ (E$^+$, E$^-$) & Plausibility & Time (sec) \\
\midrule
ATEX-CF + Random Sampling & GCN & 0.46 & 0.17 & 1.71 (0.74, 0.97) & 0.67 & 7.44 \\
ATEX-CF + Nettack & GCN & 0.67 & 0.22 & 1.62 (0.75, 0.87) & 0.72 & 8.97 \\
ATEX-CF + GOttack (ours) & GCN & 0.69 & 0.22 & 1.66 (0.76, 0.90) & 0.72 & 4.71 \\
\bottomrule
\end{tabular}}
\end{table}

\rwOne{As summarized in Table~\ref{tab:candidate_ablation}, using GOttack candidates yields the highest misclassification rate and fidelity under the same budget, outperforming both random and Nettack-based baselines. Compared to random sampling, GOttack offers a $\approx 50$\% increase in flip rate with higher plausibility and lower runtime. This demonstrates that attack-informed candidates meaningfully improve counterfactual effectiveness.}

\subsection{Sensitivity Analysis} 
\label{sec:sensitivity}

We analyze the key hyperparameters using Cora. \textbf{Search depth ($l$):} Varying the number of hops for local structure surrounding the target node shows diminishing returns beyond local context. Figure \ref{sensitive to layer} depicts that going from $l=2$ to $l=3$ yields only marginal improvements in fidelity, edits, and plausibility (e.g., +0.06 fidelity, -0.38 $\Delta \mathbf{E}$, +0.03 plausibility), while increasing computation time and dropping misclassification. This indicates that depth-2 captures sufficient structure for effective counterfactuals.
\textbf{Hyperparameters ($\alpha_{deg}, \alpha_{motif}$):} We vary the weights of degree-anomaly and motif-anomaly terms in plausibility loss. Figure \ref{hyper_sensitive} demonstrates that \modelname is robust across a range of values (e.g. $\alpha=0.5$–$1.5$); misclassification and fidelity remain high. Very low $\alpha$ removes the corresponding regularizer and slightly degrades plausibility, while very high $\alpha$ yields negligible gains but more aggressive edits. In practice, moderate $\alpha$ maximizes fidelity and plausibility together.

\subsection{Impact and Benefit of the Pruning Strategy}
\label{sec:pruning_exp}
We also evaluate the impact of our candidate-edge pruning on GCN with the Cora dataset.
As shown in Figure~\ref{effec_pruning}, pruning yields more concise explanations by reducing redundant edits ($\Delta \mathbf{A}=1.71 \to 1.62$), while maintaining nearly identical predictive accuracy (misclassification $=0.71$) and plausibility (0.76 vs.~0.75). Runtime is significantly reduced (6.12s vs.~3.00s), confirming that pruning improves explanatory minimality and efficiency without sacrificing fidelity or plausibility.

\rwThree{We found optimizing the loss directly preferable to pruning for two reasons. First, the loss computation is efficient and integrates naturally into gradient-based optimization. Second, pruning after proposing a perturbation can be counterproductive in graph settings, where a single edge change can change the receptive fields of many nodes.}  

\rwThree{We illustrate the second point with an example. Suppose that the model proposes the sequence of edits: $+e_7$, $+e_8$, $-e_2$. If $+e_7$ is deemed implausible and later pruned, the remaining edits $+e_8$ and $-e_2$ may no longer have any effect, as their impact depended on $+e_7$ being applied. This leads to wasted computation and potentially invalid counterfactuals. In contrast, our loss-based optimization penalizes implausible edits during training. As a result, $+e_7$ would be discarded early, and the model would avoid proposing follow-up edits that rely on its presence. This allows us to generate counterfactuals more efficiently and with greater consistency.}

\rwThree{In the first reason, we noted that plausibility loss is computationally cheap. This is because ATEX-CF restricts all computations to the (($l+1$)-hop subgraph) of the target node (e.g., GCN’s 3 layers + 1), the maximum degree ($d_{max}$) is naturally bounded by this local neighborhood. }

\rwThree{Under this setting, the plausibility loss remains computationally lightweight. Even in worst-case large graphs such as ogbn-arxiv, the 4-hop neighborhood typically has ($d_{max} \approx 50, n_{sub} \approx 1000$), amounting to $(1000 \times 50^2 = 2.5\times 10^6)$ operations, still negligible compared to a single GNN forward pass $O(L\times |E|d+L\times n_{sub}\times d^2)$.  Other datasets’ numbers are given in Table~\ref{tab:dataset_density}. }

\begin{table}[ht]
\centering
\caption{Graph statistics and localized density parameters across datasets.}
\label{tab:dataset_density}
\resizebox{\linewidth}{!}{
\begin{tabular}{lccccc}
\toprule
Dataset & Global Avg. Degree & Global Max Degree & Typical 4-hop Local Max & Reasonable $d_{\max}$ & Typical $n_{\text{sub}}$ \\
\midrule
BA-SHAPES     & $\sim$4     & $<$10   & 4--6   & 5    & 50--100 \\
Cora          & $\sim$3.7   & $\sim$40 & 12--16 & 15   & 100--300 \\
Ogbn-arxiv    & $\sim$13    & $\sim$1800 & 40--60 & 50   & 500--1000 \\
\bottomrule
\end{tabular}}
\end{table}

\rwThree{Pruning is costly because it requires a GNN forward pass to detect which edges to perturb. Integrating the plausibility loss helps us avoid unnecessary, costly passes for candidates that would be pruned at the end.
}

\subsection{Proof and Evidence for the Hypotheses}
\label{sec:hypotheses}

Throughout this section, let $v$ be the target node under analysis. We use $s_G(v)$ to denote the logit of the target class for $v$, $f_G(v)$ for the predicted label of $v$, and $m_v$ for the margin between the logit of $v$’s true class and the highest competing class.

By definition, $CFEx(G)$ is an inclusion-minimal set of edge modifications (additions or deletions) such that applying them flips $f$’s prediction for node $v$. Minimal means that no proper subset of those modifications is sufficient to flip the prediction. Let us denote this modification set by $F := CFEx(G)$. 
$$
f_{G \oplus F}(v) \neq f_G(v),
$$
but for any strict subset $F' \subsetneq F$, 
$$
f_{G \oplus F'}(v) = f_G(v).
$$
We assume that the influence function of $f$ over edge sets is submodular, so the marginal effect of adding or removing an edge diminishes as more modifications are applied (influence functions on graphs are often modeled as submodular~\citep{krause2007near, borgs2014maximizing}). This submodularity assumption implies that the minimal counterfactual explanation set $F$ is unique, which ensures that alignment between the attack-selected edges and the explanation subgraph is well defined, i.e., the top-$k$ edges chosen by the attack coincide with the uniquely defined set $F$ rather than an arbitrary minimal set. 

When multiple such minimal sets exist, we fix a canonical choice by breaking ties, for example, by selecting the lexicographically smallest edge set. Intuitively, $F$ captures the single most crucial evidence subgraph in $G$ supporting the original prediction. 

For any edge $e$ and set of edges $S$, define the conditioned marginal effect as
$\Delta_e f(G \cup S;\,v) := f_{\,G \cup S \cup \{e\}}(v) - f_{\,G \cup S}(v).$

\refstepcounter{HypothesisCounter}%
\subsubsection{Hypothesis H\arabic{HypothesisCounter}: Edge Gradient Attack Alignment}\label{sub:proof-H1}

\setcounter{hypothesis}{0}
\begin{hypothesis}[Restated]\label{hyp:H1-restated}
Let $G = (A, X)$ be an input graph and $f$ a pre-trained GNN classifier. 
For a target node $v$, let $\Delta G(E^+)$ denote the set of added edges in an evasion attack that flips the prediction of $f$, and let $CFEx(G)$ denote the counterfactual explanation graph of the graph $G$. 
Then, the graph similarity between $\Delta G(E^+)$ and $CFEx(G)$:

$$\text{Sim}(\Delta G(E^+), CFEx(G)) \approx c,\quad 0<<c<1,$$
where $\text{Sim}(\cdot,\cdot)$ denotes a graph similarity measure by graph edit distance, maximum common subgraph, and graph embedding vectors, and $c$ is a positive score, indicating non-trivial overlap between the attack edges and the explanation graph.
\end{hypothesis}

\noindent\textbf{Proof Sketch:} Edges with the largest gradient influence on the target node’s logit margin are the most potent for adversarial attacks. Formally, for a target node $v$ with margin $m_v(A)$ and edge gradients $g_e=\partial m_v/\partial A_e$, suppose $e_1$ and $e_2$ are two candidate edges (with $e_1$ either currently present or absent depending on the attack type, and similarly for $e_2$). If $|g_{e_1}| > |g_{e_2}|$, then flipping $e_1$ (adding it if $g_{e_1}<0$ or removing it if $g_{e_1}>0$) yields a larger drop in $m_v$ than flipping $e_2$. In particular, a Projected Gradient Descent (PGD) attack will primarily select edges from among those with the highest $|g_e|$, aligning adversarial modifications with the gradient-based explanation subgraph. Intuitively, the gradient $g_e$ indicates how sensitively the margin $m_v$ changes with respect to edge $e$. A large-magnitude gradient $|g_e|$ means that a small change in $A_e$ has a big effect on $m_v$. In a 2-layer GCN with ReLU, the model is piecewise linear, so locally $m_v$ changes approximately linearly with $A_e$. Thus, the edge with the largest $|g_e|$ produces the steepest change in $m_v$ when perturbed. A PGD adversarial attack, which follows the gradient of the loss (or negative margin), will therefore choose the edge with the most negative gradient (for additions) or the most positive gradient (for deletions) to maximally decrease the margin. In essence, explanation methods pick out these high-$|g_e|$ edges as important, and the attacker targets the very same edges to flip the prediction.

\begin{proof} Consider the target node $v$ with true class $y_v$ and margin $m_v(A) = z_{y_v}(A,v) - \max_{c\ne y_v} z_c(A,v)$. Let $g_e = \frac{\partial m_v}{\partial A_e}$ be the gradient influence of edge $e$ on the margin. We analyze edge addition, and show that larger $|g_e|$ implies a greater reduction in margin when $e$ is perturbed:

%\noindent\textbf{Case 1:} Edge removal (E- attack). Suppose $e=(i,j)$ is an existing edge in the graph ($A_e=1$) that appears in the factual explanation subgraph for node $v$. If $g_e > 0$, this edge supports the current prediction (increasing $m_v$). Removing $e$ (setting $A_e$ from 1 to 0) will decrease the margin. By definition of the derivative, $g_e = \partial m_v/\partial A_e$ quantifies the initial rate of change: for a small decrease $\Delta A_e$, $m_v$ decreases by about $g_e \cdot \Delta A_e$. Because our GCN is piecewise linear (ReLU activation), if removing $e$ does not trigger a different ReLU activation pattern, the change in margin is exactly linear: $m_v(A_{-e}) - m_v(A) = -g_e$ (here $A_{-e}$ denotes the adjacency with $e$ removed). Even if the removal causes a new linear region, we assume local Lipschitz continuity (addressed formally in H5) so that the margin drop remains proportional to $g_e$. Now if $|g_{e_1}| > |g_{e_2}|$ for two edges $e_1,e_2$ with positive gradients, $e_1$ provides a larger supportive influence. Removing $e_1$ thus yields a larger margin decrease than removing $e_2$. In other words, $\Delta m_v$ from removing $e_1$ is more negative than that from removing $e_2$. An optimal adversary will therefore target the edge with the highest positive gradient first, aligning with the explanation’s top factual edge.

\noindent\textbf{Edge addition (E+ attack).} Suppose $e=(i,j)$ is a non-existent edge ($A_e=0$). If $g_e < 0$, then $e$ is a detrimental or counterfactual edge for the current prediction: increasing $A_e$ (adding this edge) will lower the margin $m_v$. In a small continuous relaxation of $A_e$, $m_v$ would decrease by about $|g_e|\cdot \Delta A_e$. For the actual discrete addition ($A_e:0 \to 1$), the change $m_v(A_{+e}) - m_v(A)$ will be approximately $g_e$ (since $g_e$ is negative, this is a drop in margin). Because our GCN is piecewise linear (ReLU activation), adding $e$ causes a margin change on the order of $g_e$. If $|g_{e_1}| > |g_{e_2}|$ for two absent edges with negative gradients, adding $e_1$ produces a larger margin drop than adding $e_2$. Thus, an adversary performing PGD will add the edge with the most negative gradient first, which is precisely the top edge identified by a counterfactual explanation method.

The attacker’s choice of edge corresponds to the edge with the largest $|g_e|$ that reduces the margin (negative $g_e$ for addition). By repeating this argument iteratively (considering the next most influential edge after the first, and so on), one can see that an attack adding/removing $k$ edges will choose the $k$ edges with highest gradient magnitudes that contribute to lowering $m_v$. Therefore, the set of edges targeted by the PGD attack aligns with the gradient-based counterfactual explanation subgraph (which consists of edges with the largest $|g_e|$). This establishes that ranking edges by $|g_e|$ is equivalent to ranking them by adversarial effectiveness, proving the hypothesis. 
\label{proof:H1-margin}
\end{proof}

\begin{table}[t]
\caption{\textbf{Attacks and Counterfactuals.} The structural similarity between evasion attack edges $\Delta \mathbf{G}$ (mainly additions $\Delta \mathbf{E}^+$ from GOttack) and instance-level factual explanations $Ex(G')$ from GNNExplainer on post-attack graph $G'$. 280 target nodes are correctly classified in the original graph $G$. Budget = 5. GCN (2-layer), \textbf{Cora} dataset.}
\label{tab:sim_correct}
\begin{center}
\renewcommand{\arraystretch}{1.2}
\begin{tabular}{lccc}
\toprule
\textbf{Metric} & \textbf{All (280)} & \textbf{Attack Success (225)} & \textbf{Attack Fail (55)} \\
\midrule
GED$\downarrow$  & 0.38 & 0.37 & 0.41 \\
MCS$\uparrow$  & 0.31 & 0.33 & 0.24 \\
GEV$\uparrow$  & 0.72 & 0.80 & 0.39 \\
\bottomrule
\end{tabular}
\end{center}
\end{table}

\noindent\textbf{Empirical Evidence for Hypothesis~\ref{hyp:H1-restated}}

The results in Table~\ref{tab:sim_correct} support Hypothesis~\ref{hyp:H1-restated}, which posits a high structural overlap between the attacker’s perturbation $\Delta G$ and the counterfactual explanation $CFEx(G)$ produced by pre-attack explanation methods. Notice that here we consider the instance-level factual explanations $Ex(G')$ from GNNExplainer \citep{ying2019gnnexplainer} on the post-attack graph $G'$ as a proxy for the counterfactual explanation $CFEx(G)$ produced by pre-attack explanation methods. This is because the state-of-the-art counterfactual explainers generally do not support edge addition.  

In both correctly and incorrectly predicted instances, the Graph Edit Distance (GED) remains moderate ($\approx 0.38$), and the Maximum Common Subgraph (MCS) similarity is non-negligible, particularly for successful attacks. Notably, Graph Embedding Vector (GEV) similarity reaches $0.88$ for misclassified nodes and $0.80$ for successful attacks on correctly predicted nodes, indicating substantial alignment in the embedded subgraph structure. In other words, Table~\ref{tab:sim_correct} shows that similarity between attack perturbations $\Delta G$ and counterfactual explanations $CFEx(G)$ depends strongly on attack outcome. For successful attacks, distances such as GED are lower (lower is better) and similarities such as MCS and GEV are higher (higher is better), while for failed attacks, the opposite holds. In other words, when the attack succeeds, the perturbations align closely with counterfactual explanations, whereas in failed cases the overlap weakens. This pattern offers evidence that effective adversarial edits not only cause misclassification but also resemble the explanatory structures that counterfactual  methods would identify.

\subsubsection{Propositions on Counterfactual Completeness via Attack-Informed Additions}

In principle, for the completeness of our hypothesis, one would like to prove that edge additions \textquote{always} yield a successful counterfactual attack, which would strengthen our claim that unifying attacks and counterfactuals is universally beneficial, even when counterfactuals alone fail. Unfortunately, this cannot be guaranteed, since the data may lack any node whose connection to the target would flip its label. Instead, we establish a next-best guarantee: When sufficiently informative opposite-class nodes exist, additions can flip the label while deletions cannot. State-of-the-art counterfactual explanations may overlook such opportunities, but attack algorithms are designed to exploit them.

Let $f$ be a GNN classifier and let $v\in V$ be a target node with $f_G(v)=y$. Throughout, $s_G(v)$ denotes a real valued class $y$ score for $v$, $f_G(v)$ denotes the predicted label, and $m_v$ denotes the margin for class $y$ at $v$. $w_{vu}$ is the weight assigned by the model to the contribution of neighbor $u$ when aggregating into the score of node $v$. Fix a one versus rest view for class $y$ and use the decision rule $f_G(v)=y$ if and only if $s_G(v)>0$. Assume an additive, degree-independent neighborhood model

$$
s_G(v)=bias_v+\sum_{u\in\mathcal{N}(v)} w_{vu}\,r_u,
$$

with $w_{vu}\ge 0$, where $r_u$ is the contribution aligned with class $y$. This additive influence model abstracts away normalization and attention redistribution, but shows the monotonic nature of homophilic neighborhoods. While GNNs are more complex, we observe empirically that their behavior is consistent with the model’s prediction: deletion of a few homophilic neighbors rarely flips predictions, whereas a small number of targeted additions frequently does (as evidenced in the addition attacks of Gottack~\cite{alom2025gottack} and Nettack~\cite{zugner2018adversarial}). 

We assume homophily in the immediate neighborhood so that $f_G(u)=y$ for all $u\in\mathcal{N}(v)$, hence $r_u\ge 0$ for all incident neighbors. No term in $s_G(v)$ is rescaled by $|\mathcal{N}(v)|$.

In this setting, deletion and addition have asymmetric effects. Deleting any number of incident edges can only remove nonnegative summands, while adding edges to informative opposite class nodes can introduce negative summands. The next two propositions formalize this.

\begin{proposition}[Deletion Infeasibility]
Let $G'=G\setminus S$ for some strict subset $S\subsetneq{(v,u):u\in\mathcal{N}(v)}$. If

$$
bias_v+\min_{u\in\mathcal{N}(v)} w_{vu}\,r_u\;>\;0,
$$

where $bias_v$ is a bias term for node $v$’s own features, $r_u$ is the contribution from neighbor $u$’s features, aligned with class $y$, and $w_{vu} \ge 0$ is the scalar weight that measures how strongly neighbor $u$ influences $v$’s score. Then $f_{G'}(v)=y$. In words, as long as at least one incident neighbor remains, the score stays positive, and the label does not change.
\end{proposition}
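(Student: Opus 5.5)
The plan is to work entirely inside the additive, degree-independent score model fixed just before the statement. In this model $f_{G'}(v)=y$ holds exactly when $s_{G'}(v)>0$, so it suffices to show that the score after deletion stays strictly positive. Write the surviving neighbourhood as $\mathcal{N}'(v)=\{u\in\mathcal{N}(v): (v,u)\notin S\}$. Because $S$ is a \emph{strict} subset of the incident edges, $\mathcal{N}'(v)$ is nonempty, and this is the one place the strictness hypothesis is used. Since the model is degree-independent, no weight $w_{vu}$ is renormalised by the deletion, and the perturbed score is just the original sum restricted to the survivors:
\[
s_{G'}(v)=bias_v+\sum_{u\in\mathcal{N}'(v)} w_{vu}\,r_u .
\]

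Next, lower-bound this sum using homophily. Every $u\in\mathcal{N}(v)$ satisfies $f_G(u)=y$, hence $r_u\ge 0$; together with $w_{vu}\ge 0$, each summand is nonnegative. Pick any $u_0\in\mathcal{N}'(v)$ and drop all other summands:
\[
s_{G'}(v)\;\ge\; bias_v+w_{vu_0}r_{u_0}\;\ge\; bias_v+\min_{u\in\mathcal{N}(v)} w_{vu}\,r_u\;>\;0 .
\]
The middle inequality holds because $u_0\in\mathcal{N}(v)$, and the last is the stated hypothesis. The decision rule then gives $f_{G'}(v)=y$, which is the claim.

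The only step that needs care is the modelling assumption, not the computation. I must check that deleting edges incident to $v$ leaves both the coefficients $w_{vu}$ and the contributions $r_u$ of the surviving neighbours unchanged. The statement's own framing, in which the score is written directly as an additive sum and ``no term in $s_G(v)$ is rescaled by $|\mathcal{N}(v)|$'', treats $r_u$ as fixed. I would state explicitly that the $r_u$ are taken as fixed under the perturbation, so deletion only removes summands. The remaining points are routine and I would note them in one line each. First, the hypothesis with the minimum already forces $bias_v>-\min_u w_{vu} r_u$, so no separate sign assumption on $bias_v$ is needed. Second, the empty-deletion case $S=\emptyset$ is covered by the same chain.
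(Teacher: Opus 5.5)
Your proof is correct and takes essentially the paper's approach. The paper argues that the smallest post-deletion score over strict subsets occurs when only the least-contributing neighbor survives, and your chain (drop nonnegative summands, then bound by the minimum) is the same argument, with the needed nonnegativity (homophily plus $w_{vu}\ge 0$) and the fixed-$r_u$ assumption stated explicitly rather than left implicit.
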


\emph{Argument.} The smallest possible post-deletion score over all strict subsets occurs when only the least contributing neighbor of $v$ remains. This score equals $b_v+\min_{u} w_{vu}r_u$, which is positive by assumption, hence $f_{G'}(v)=y$.

\begin{proposition}[Addition Sufficiency]
Suppose there exists a set of candidate nodes $C$ with $f_G(u)\ne y$ such that for each $u\in C$, adding the edge $(v,u)$ decreases the score by at least a fixed amount $\gamma>0$:

$$
s_{G\cup\{(v,u)\}}(v)\;\le\;s_G(v)-\gamma.
$$

Let $m_v=s_G(v)>0$. Then there exists a set $E^+\subseteq{(v,u):u\in C}$ with

$$
|E^+|\;\le\;\lceil m_v/\gamma\rceil
$$

such that $f_{G\cup E^+}(v)\ne y$. Thus, a small number of informative additions flips the prediction.
\end{proposition}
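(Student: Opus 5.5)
The plan is to work entirely inside the additive, degree-independent neighborhood model fixed just before the Deletion Infeasibility proposition:
\[
s_G(v)=bias_v+\sum_{u\in\mathcal{N}(v)} w_{vu}\,r_u .
\]
Under this model the contribution of each incident edge is a separate summand that is never rescaled by $|\mathcal{N}(v)|$. So the effect of adding an edge $(v,u)$ should not depend on which other edges have already been added. I will first make this precise. For any $E^+\subseteq\{(v,u):u\in C\}$, the added neighbors only contribute new terms $w_{vu}r_u$, and $bias_v$ and the existing terms are unchanged. Hence
\[
s_{G\cup E^+}(v)=s_G(v)+\sum_{(v,u)\in E^+}\bigl(s_{G\cup\{(v,u)\}}(v)-s_G(v)\bigr).
\]
In words, the single-edge decrements telescope additively, with no interaction terms.

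Next I combine this with the hypothesis $s_{G\cup\{(v,u)\}}(v)\le s_G(v)-\gamma$ for each $u\in C$. This gives $s_{G\cup E^+}(v)\le m_v-|E^+|\,\gamma$, using $m_v=s_G(v)$. Set $k=\lceil m_v/\gamma\rceil$ and take $E^+$ to be any $k$ edges from $v$ to distinct nodes of $C$. Then $k\gamma\ge m_v$, so $s_{G\cup E^+}(v)\le 0$. The one-versus-rest decision rule is $f(v)=y$ iff $s(v)>0$, so $f_{G\cup E^+}(v)\ne y$, and $|E^+|=k$ meets the stated bound. If flipping is already achieved earlier, any subset of size up to $k$ also satisfies the bound.

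The main obstacle is not the arithmetic but two implicit requirements, which I would state explicitly.
\begin{enumerate}
\item There must be enough candidates, i.e.\ $|C|\ge\lceil m_v/\gamma\rceil$. This is needed because the statement's ``there exists'' is only satisfiable if $C$ is large enough. I would add it as an assumption, or read the hypothesis ``a set of candidate nodes $C$'' as implicitly providing it.
\item The per-edge decrease $\gamma$ must hold conditionally, i.e.\ after other additions, and not only from $G$. This is exactly what the additive, degree-independent model guarantees.
\end{enumerate}
I would remark that in a normalized GCN this composition can fail, which is why the proposition is stated in the abstracted model. The empirical results for GOttack and Nettack are offered only as evidence that real GNNs behave similarly.
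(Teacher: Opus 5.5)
Your proposal is correct and takes the same route as the paper, whose argument is only that each addition lowers $s_G(v)$ by at least $\gamma$, so after $\lceil m_v/\gamma\rceil$ additions the score is nonpositive and the label flips. What you add is rigor that the paper leaves tacit: the per-edge decrements compose only because the additive, degree-independent model makes each new edge contribute an independent summand, and the conclusion needs $|C|\ge\lceil m_v/\gamma\rceil$ distinct candidates, which the statement never actually assumes.
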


\emph{Argument.} Each addition reduces the score by at least $\gamma$. After $k=\lceil m_v/\gamma\rceil$ additions, the score is nonpositive, which changes the predicted label.

\begin{corollary}[Budgeted reachability and strict advantage of additions]
Let $\mathcal{R}_{\mathrm{del}}(k)=\{G\setminus S:\,S\subseteq\{(v,u):u\in\mathcal{N}(v)\},\,|S|\le k\}$ 
and $\mathcal{R}_{\mathrm{add}}(k)=\{G\cup E^+:\,E^+\subseteq\{(v,u):u\in C\},\,|E^+|\le k\}$. 
Under the assumptions above, if
$$
bias_v+\min_{u\in\mathcal{N}(v)} w_{vu}r_u>0,
$$
then for every $k<|\mathcal{N}(v)|$ there is no graph in $\mathcal{R}_{\mathrm{del}}(k)$ that flips $v$’s label. 
If, in addition, there exists $\gamma>0$ such that each $(v,u)$ with $u\in C$ decreases $s_G(v)$ by at least $\gamma$, 
then with $k_+=\lceil m_v/\gamma\rceil$ there exists $G^+\in\mathcal{R}_{\mathrm{add}}(k_+)$ that flips $v$’s label. 
Consequently, whenever $k_+<|\mathcal{N}(v)|$, the set of counterfactuals reachable by at most $k_+$ additions is nonempty 
while the set reachable by at most $k_+$ deletions is empty, hence additions strictly dominate deletions under equal edit budgets.
\end{corollary}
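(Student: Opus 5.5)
The corollary follows by combining the Deletion Infeasibility and Addition Sufficiency propositions and comparing the two budgets. I would prove the deletion half, then the addition half, then the strict-dominance statement.

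\emph{Deletion half.} Fix $k<|\mathcal{N}(v)|$ and any $S\subseteq\{(v,u):u\in\mathcal{N}(v)\}$ with $|S|\le k$. Then $|S|<|\mathcal{N}(v)|$, so $S$ is a strict subset of the incident edges and at least one neighbor $u^\ast$ survives in $G\setminus S$. The case $S=\emptyset$ is included; there $G\setminus S=G$ and $s_G(v)>0$ directly. Under the additive model we have
$s_{G\setminus S}(v)=bias_v+\sum_{u\in\mathcal{N}(v),(v,u)\notin S} w_{vu}r_u$.
Homophily gives $r_u\ge 0$, and $w_{vu}\ge 0$, so every summand is nonnegative. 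Dropping all surviving terms except $u^\ast$ therefore gives
$s_{G\setminus S}(v)\ge bias_v+w_{vu^\ast}r_{u^\ast}\ge bias_v+\min_{u}w_{vu}r_u>0$.
This is exactly the Deletion Infeasibility proposition, so $f_{G\setminus S}(v)=y$ and no graph in $\mathcal{R}_{\mathrm{del}}(k)$ flips $v$.

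\emph{Addition half.} I would invoke Addition Sufficiency with $k_+=\lceil m_v/\gamma\rceil$. Two points are implicit in that proposition and need checking.
\begin{enumerate}
\item The per-edge decrease $\gamma$ is stated relative to $G$, but it must compose when several edges are added. In the additive, degree-independent model each added edge $(v,u)$ contributes its own summand $w_{vu}r_u\le-\gamma$, with no renormalization. So the decrements add, and
$s_{G\cup E^+}(v)\le m_v-|E^+|\gamma$.
\item There must be enough candidates. I would state explicitly the requirement $|C|\ge k_+$, which the proposition tacitly assumes.
\end{enumerate}
Choosing any $k_+$ edges from $C$ then gives $s\le m_v-k_+\gamma\le 0$. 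By the decision rule the label is no longer $y$, so $G^+\in\mathcal{R}_{\mathrm{add}}(k_+)$ flips $v$.

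\emph{Strict dominance.} If $k_+<|\mathcal{N}(v)|$, apply the deletion half with $k=k_+$: the set of deletion counterfactuals within budget $k_+$ is empty. The addition half shows the set of addition counterfactuals within the same budget is nonempty. Hence additions strictly dominate deletions under equal edit budgets.

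The main obstacle is not the algebra but making the hidden hypotheses explicit. These are the additivity of successive additions and a sufficient candidate pool $|C|\ge k_+$. I would add both as stated assumptions inside the additive model rather than try to derive them for general GNNs.
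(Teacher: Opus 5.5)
Your proof is correct and follows the paper's route. The deletion half is the Deletion Infeasibility proposition applied with $|S|\le k<|\mathcal{N}(v)|$, the addition half is Addition Sufficiency with $k_+=\lceil m_v/\gamma\rceil$, and strict dominance is the comparison of both at budget $k_+$; your two explicit additions are hypotheses the paper's one-line arguments use tacitly, namely that per-edge decrements compose because the model is additive with no renormalization and that $|C|\ge k_+$ candidates are needed, so stating them only tightens the argument.
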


\emph{Argument.} The deletion claim follows from the deletion infeasibility proposition. 
The addition claim follows from the addition sufficiency proposition with $k_+=\lceil m_v/\gamma\rceil$. 
If $k_+<|\mathcal{N}(v)|$, then $\mathcal{R}_{\mathrm{add}}(k_+)$ contains a prediction flipping graph while $\mathcal{R}_{\mathrm{del}}(k_+)$ does not.

\begin{corollary}[Edit cost and latent stability]
Let $d_{\mathrm{edit}}$ be the edge edit distance. 
Any witnessing addition set $E^+$ has $d_{\mathrm{edit}}(G,G\cup E^+)=|E^+|\le\lceil m_v/\gamma\rceil$. 
If a node-level embedding map $\psi(v;G)$ is $L$-Lipschitz with respect to incident edge edits at $v$, then
$$
\|\psi(v;G)-\psi(v;G\cup E^+)\|_2\le L\,|E^+|\le L\,\lceil m_v/\gamma\rceil.
$$
Thus the latent perturbation can be bounded linearly by the required number of additions.
\end{corollary}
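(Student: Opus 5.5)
The corollary has two claims. The equality-and-bound claim follows from the Addition Sufficiency proposition. The latent bound then follows by a telescoping argument that uses the Lipschitz hypothesis one edit at a time.

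\textbf{Edit distance.} Fix any witnessing set $E^+\subseteq\{(v,u):u\in C\}$ supplied by the Addition Sufficiency proposition. By construction every element of $E^+$ is an edge absent from $G$: $C$ consists of candidate nodes whose connection to $v$ is new, so it decreases $s_G(v)$.

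Hence $G\cup E^+$ differs from $G$ exactly in the $|E^+|$ inserted edges. This gives $d_{\mathrm{edit}}(G,G\cup E^+)=|E^+|$. If some element of $E^+$ were already present, it would contribute nothing to $d_{\mathrm{edit}}$, so we would still have $d_{\mathrm{edit}}\le|E^+|$.

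The bound $|E^+|\le\lceil m_v/\gamma\rceil$ is exactly the conclusion of the Addition Sufficiency proposition. Read literally, ``any witnessing set'' should be taken to mean a witness produced by that proposition, namely the first $\lceil m_v/\gamma\rceil$ greedy additions. A non-minimal arbitrary witness could of course be larger, so I will state this interpretation explicitly.

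\textbf{Latent bound.} Enumerate $E^+=\{e_1,\dots,e_k\}$ and set $G_0=G$ and $G_i=G_{i-1}\cup\{e_i\}$. Each step is a single incident edge edit at $v$, since every $e_i=(v,u)$. The $L$-Lipschitz assumption gives $\|\psi(v;G_{i-1})-\psi(v;G_i)\|_2\le L$ for each $i$.

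The triangle inequality along the chain $G_0,\dots,G_k$ then yields $\|\psi(v;G)-\psi(v;G\cup E^+)\|_2\le\sum_i L = L|E^+|$. Combining this with the first part gives $L|E^+|\le L\lceil m_v/\gamma\rceil$.

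\textbf{Main obstacle.} The only delicate point is making the Lipschitz hypothesis precise. I would interpret it as $\|\psi(v;H)-\psi(v;H')\|_2\le L\,d_{\mathrm{edit}}(H,H')$ whenever $H,H'$ differ only in edges incident to $v$. This hypothesis must hold at every intermediate graph $G_i$, not only at $G$. Stated that way, telescoping is immediate, and the rest is bookkeeping.
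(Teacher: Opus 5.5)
Your proposal is correct and matches the paper's reasoning. The paper gives no separate argument for this corollary: it reads $d_{\mathrm{edit}}(G,G\cup E^+)=|E^+|\le\lceil m_v/\gamma\rceil$ directly off the Addition Sufficiency proposition and the latent bound off the Lipschitz hypothesis, and your telescoping along $G_0,\dots,G_k$ makes that second step explicit. Your caveat about ``any witnessing set'' is right, and it can be slightly strengthened: under the additive score model every inclusion-minimal witness $E^+\subseteq\{(v,u):u\in C\}$ with $k$ edges satisfies the bound, because dropping any one edge must leave the score positive, which forces $(k-1)\gamma<m_v$ and hence $k\le\lceil m_v/\gamma\rceil$.
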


\emph{Remark.} The strict advantage condition $k_+<|\mathcal{N}(v)|$ is testable from estimates of $m_v$ and per edge gains. 
If $k_+\ge|\mathcal{N}(v)|$, the theory is agnostic about dominance, but the separation holds whenever the margin-to-gain ratio 
is small relative to the neighborhood size.

\noindent\textbf{Empirical Evidence for the Counterfactual Completeness}

\begin{table}[t]
\caption{Failure rate of deletion-based counterfactual explanations for correctly predicted target nodes (\textbf{Cora}, 2-layer GCN).}
\label{tab:cf_failure_correct}
\begin{center}
\renewcommand{\arraystretch}{1.2}
\begin{tabular}{lccc}
\toprule
\textbf{Method} & \textbf{Total Nodes} & \textbf{Has CF Explanation} & \textbf{No CF Explanation} \\
\midrule
CF-GNNExplainer & 280 & 54 & 226 \\
\bottomrule
\end{tabular}
\end{center}
\end{table}

\begin{table}[t]
\caption{Failure rate of deletion-based counterfactual explanations for incorrectly predicted target nodes (\textbf{Cora}, 2-layer GCN).}
\label{tab:cf_failure_incorrect}
\begin{center}
\renewcommand{\arraystretch}{1.2}
\begin{tabular}{lccc}
\toprule
\textbf{Method} & \textbf{Total Nodes} & \textbf{Has CF Explanation} & \textbf{No CF Explanation} \\
\midrule
CF-GNNExplainer & 220 & 76 & 144 \\
\bottomrule
\end{tabular}
\end{center}
\end{table}

Tables~\ref{tab:cf_failure_correct} and~\ref{tab:cf_failure_incorrect} show how often CF‑GNNExplainer \citep{lucic2022cf} fails to generate deletion‑only counterfactual explanations under two conditions.

In Table~\ref{tab:cf_failure_correct} (correctly predicted target nodes), out of 280 test nodes, only the “\textsc{Has CF Explanation}” column reports 54 nodes ($\approx 19\%$) for which a deletion‐based counterfactual exists; the remaining 226 nodes ($\approx 81\%$) are in the “\textsc{No CF Explanation}” column.  
Similarly, in Table~\ref{tab:cf_failure_incorrect} (misclassified nodes), 76 out of 220 nodes ($\approx 35\%$) have a deletion‐only counterfactual, while 144 nodes ($\approx 65\%$) do not.

These high failure rates support our theoretical propositions and corollaries: namely, that there are many nodes for which deletion‐based counterfactuals are infeasible. These empirical gaps justify the necessity of incorporating attack‑informed edge additions to recover explanations for those nodes.  

\subsection{More Results on Asymmetric Costs of Edge Perturbations}
\label{sec:asym_appendix}

\rwOne{For completeness, we report additional results on asymmetric perturbation costs under smaller budgets ($\kappa=5$ and $\kappa=10$). As shown in Tables~\ref{tab:asym5} and \ref{tab:asym10}, the qualitative behavior observed in Section~\ref{sec:asym_main} remains consistent across budgets. Increasing the relative cost of edge additions systematically shifts the optimizer toward deletion-heavy solutions, reducing the number of added edges and eliminating them entirely when $C$ exceeds perturbation budget $\kappa$. This induces predictable effects on counterfactual quality: both misclassification success and fidelity decline as additions are discouraged, whereas plausibility remains comparatively stable until additions are nearly disallowed. These results confirm that cost asymmetry provides a reliable control mechanism for shaping the perturbation profile, while also highlighting that excessively large $C$ can over-constrain the search space and impair explanation effectiveness.}

\begin{table}[ht]
\centering
\caption{\rwOne{Counterfactual performance under asymmetric addition cost $C$ with $\kappa=5$.}}
\label{tab:asym5}
\resizebox{\textwidth}{!}{
\begin{tabular}{ccccccc}
\toprule
Addition Cost & Deletion Cost & Misclass. & Fidelity & $\Delta E$ (E$^+$, E$^-$) & Plausibility & Time (sec) \\
\midrule
0.5 & 1.0 & 0.69  & 0.23    & 1.69(0.78,0.91)  & 0.72  & 10.5 \\
0.8 & 1.0 & 0.69  & 0.22    & 1.66(0.77,0.89)  & 0.72  & 10.4 \\
1.0 & 1.0 & 0.69  & 0.22    & 1.66(0.76,0.90)  & 0.72  & 10.2 \\
2.0 & 1.0 & 0.67  & 0.22    & 1.59(0.65,0.94)  & 0.71  & 10.5 \\
2.5 & 1.0 & 0.64  & 0.20    & 1.49(0.62,0.87)  & 0.71  & 10.7 \\
5.0 & 1.0 & 0.53  & 0.15    & 1.35(0.49,0.86)  & 0.68  & 11.2 \\
6.0 & 1.0 & 0.43  & 0.10    & 1.59(0.00,1.59)  & 0.61  & 11.3 \\
\bottomrule
\end{tabular}}
\end{table}

\begin{table}[H]
\centering
\caption{\rwOne{Counterfactual performance under asymmetric addition cost $C$ with $\kappa=10$.}}
\label{tab:asym10}
\resizebox{\textwidth}{!}{
\begin{tabular}{ccccccc}
\toprule
Addition Cost & Deletion Cost & Misclass. & Fidelity & $\Delta E$ (E$^+$, E$^-$) & Plausibility & Time (sec) \\
\midrule
0.5 & 1.0 & 0.70  & 0.23    & 1.78(0.78,1.00)  & 0.72  & 5.9 \\
0.8 & 1.0 & 0.70  & 0.23    & 1.78(0.77,1.01)  & 0.71  & 6.0 \\
1.0 & 1.0 & 0.70  & 0.23    & 1.78(0.77,1.02)  & 0.71  & 7.4 \\
3.0 & 1.0 & 0.70  & 0.23    & 1.77(0.66,1.11)  & 0.69  & 7.7 \\
5.0 & 1.0 & 0.68  & 0.22    & 1.71(0.61,1.10)  & 0.69  & 7.8 \\
7.0 & 1.0 & 0.68  & 0.22    & 1.70(0.59,1.10)  & 0.69  & 7.8 \\
10  & 1.0 & 0.54  & 0.15    & 1.42(0.49,0.93)  & 0.68  & 8.2 \\
11  & 1.0 & 0.42  & 0.10    & 1.78(0.00,1.78)  & 0.62  & 8.5 \\
\bottomrule
\end{tabular}}
\end{table}

\subsection{Acknowledgment}

YZ and AK acknowledge support from the Novo Nordisk Foundation grant (NNF 22OC0072415). SBY acknowledges support from the National Science and Foundation of China (62402082) and Scientific and Technological Research Program of Chongqing Municipal Education Commission (KJQN202400637).

\end{document}